\newtheorem{theorem}{Theorem}[section]
\newtheorem{lemma}[theorem]{Lemma}
\newtheorem{corollary}[theorem]{Corollary}
\newtheorem{definition}[theorem]{Definition}
\newtheorem{assumption}[theorem]{Assumption}
\newtheorem{remark}[theorem]{Remark}
\newcommand\bA{\mathbf{A}}
\newcommand\bB{\mathbf{B}}
\newcommand\bC{\mathbf{C}}
\newcommand\bG{\mathbf{G}}
\newcommand\bE{\mathbb{E}}
\newcommand\teta{\tilde{\eta}}
\newcommand\tr{\tilde{r}}
\newcommand\bxi{\boldsymbol{\xi}}
\newcommand\dist{\operatorname{dist}}
\newcommand\reg{\operatorname{Regret}_d}
\newcommand\regs{\operatorname{Regret}_s}
\definecolor{purple}{rgb}{0.6, 0.2, 0.8}
\newcommand{\red}[1]{{\color{red}#1}}
\definecolor{UniBlue}{RGB}{83,121,170}
\definecolor{DarkGray}{RGB}{90,90,90}
\definecolor{LightGray}{RGB}{150,150,150}
\definecolor{oldTextGreen}{RGB}{115,155,15}
\definecolor{teal}{RGB}{100, 200,10}
\definecolor{oldOcean}{RGB}{23,142,189}
\definecolor{Ocean}{RGB}{30,106,181}
\definecolor{BG}{RGB}{215,215,215}
\definecolor{darkred}{RGB}{204,41,0}
\title{
Locally Differentially Private Online Federated Learning With Correlated Noise
}
\author{Jiaojiao Zhang, Linglingzhi Zhu, Dominik Fay and Mikael Johansson
\thanks{This work is supported in part by the funding from Digital Futures and VR under the contract 2019-05319. Parts of the material in this paper have been published at the 63rd IEEE Conference on Decision and Control.}
\thanks{Jiaojiao Zhang, Dominik Fay, and Mikael Johansson are with the Division of Decision and Control Systems, School of Electrical Engineering and Computer
Science, KTH Royal Institute of Technology, SE-100 44 Stockholm, Sweden. 
{\tt\small \{jiaoz,dominikf,mikaelj\}@kth.se  }}%
\thanks{Linglingzhi Zhu is with the H. Milton Stewart School of Industrial and Systems Engineering, Georgia Institute of Technology, Atlanta, Georgia, USA.
{\tt\small llzzhu@gatech.edu}}%
}
\begin{document}

\maketitle

\begin{abstract}
We introduce a locally differentially private (LDP) algorithm for online federated learning that employs temporally correlated noise to 
improve utility while preserving privacy. To address challenges posed by the correlated noise and local updates with streaming non-IID data, we develop a perturbed iterate analysis \replaced{that controls}{to control} the impact of the noise on the utility. Moreover, we demonstrate how the drift errors from local updates can be effectively managed \replaced{for several classes of nonconvex loss functions}{under a class of nonconvex conditions}. Subject to an $(\epsilon, \delta)$-LDP budget, we establish a dynamic regret bound \replaced{that quantifies}{, quantifying} the impact of key parameters and the intensity of changes in \replaced{the dynamic environment}{dynamic environments} on the learning performance. Numerical experiments confirm the efficacy of the proposed algorithm.
\end{abstract}
\begin{IEEEkeywords}
Online federated learning, differential privacy, correlated noise, dynamic regret.
\end{IEEEkeywords}
\section{Introduction}
In this paper, we focus on online federated learning (OFL)~\cite{cdc2021online,wang2023linear,liu2023differentially}, a framework that combines the principles of federated learning (FL) and online learning (OL) to address the challenges of real-time data processing across distributed data resources. In OFL, a central \textit{server} coordinates multiple \textit{learners}, each interacting with streaming \textit{clients} as they arrive sequentially. The client data is used collaboratively to improve the utility of all learners~\cite{kairouz2021advances, tsp-5,tsp-6}; see Fig.~\ref{fig-ofl}. 
\begin{figure}[htbp]
\centering
\includegraphics[width=7.5cm]{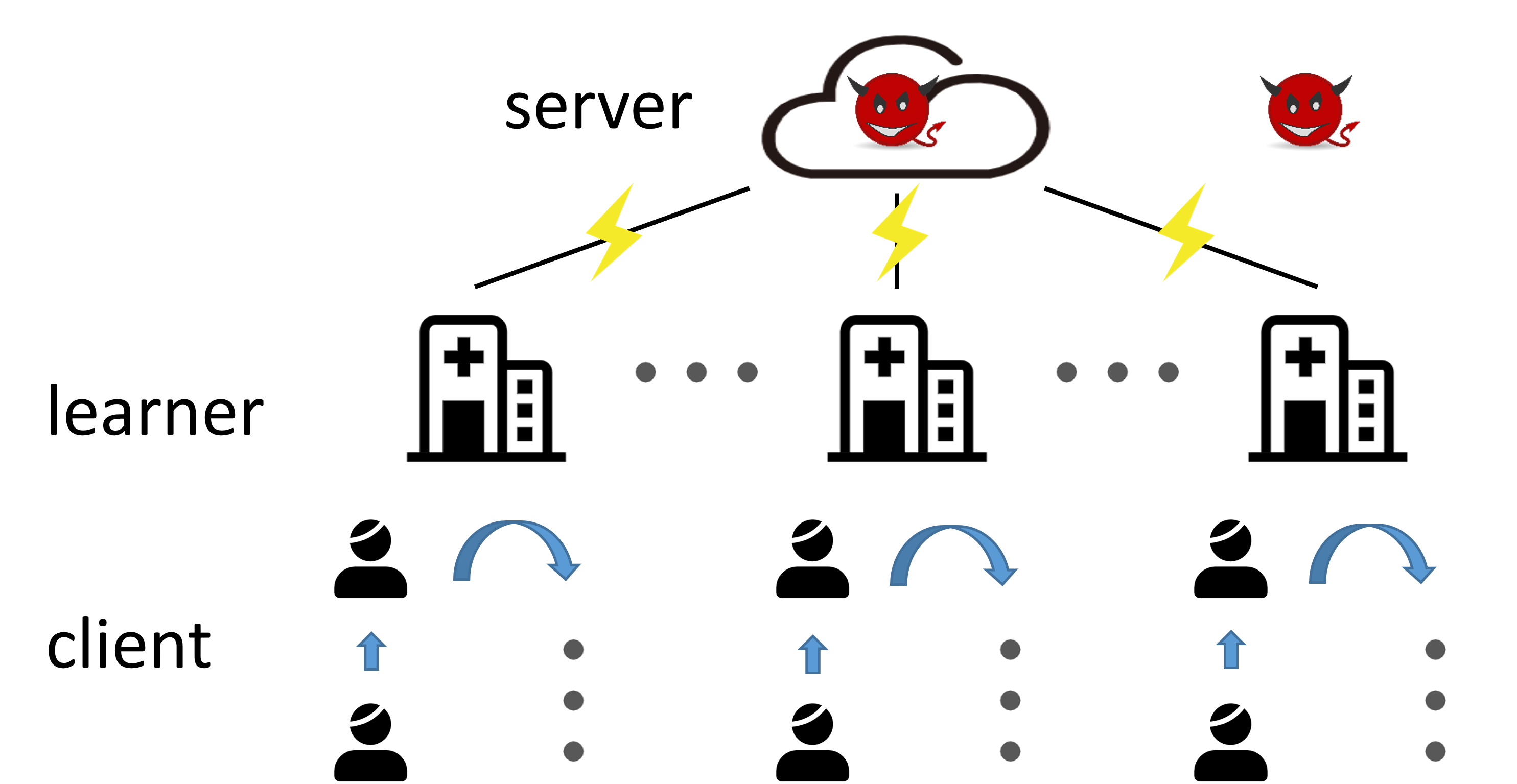}
\caption{OFL framework}
\label{fig-ofl}
\end{figure}

\replaced{
Traditional FL operates in an offline setting, where data is stored on learners and can be sampled IID (e.g., through random mini-batches at each iteration) from a fixed distribution, allowing for repeated use of the same data. However, in practical applications, data often arrives in a streaming fashion, making offline FL insufficient. This online setting introduces two main challenges: (i) Storing data is inefficient in terms of space and raises privacy concerns. Avoiding data storage optimizes resource usage and reduces the risk of data leakage and unauthorized access, but it also makes data reuse difficult. (ii) Streaming data that arrives at different time steps is typically non-IID, even for the same learner. Considering the potentially substantial differences among clients associated with different learners, data across learners can also exhibit non-IID characteristics, even at the same time step~\cite{cdc2021online}.}{
Traditional FL operates offline, assuming that data can be stored on learners and sampled IID from a fixed distribution, allowing repeated use of the data. However, in practical applications, we often face streaming data, making offline FL insufficient: (i) Since data storage is inefficient in terms of space or undesirable due to privacy concerns, not storing data can lead to more efficient use of resources and reduce the risk of data leakage and unauthorized access. However, it typically makes data reuse difficult.  (ii) Streaming data that arrive at different time steps is typically non-IID, even on the same learner. Considering the possibly substantial differences among clients associated with different learners, the data across learners also exhibits non-IID characteristics, even in the same time step~\cite{cdc2021online}.}
Due to time-varying data streams, updating and releasing the model with new data can enhance model freshness.  This capability of providing continuously improved services is crucial in applications like \deleted{healthcare,} recommendation systems, predictive maintenance, and anomaly detection. These motivate us to study online FL.

A significant concern of federated learning
is the risk of privacy leakage. Clients in the online learning process need assurance that their sensitive private data is not exposed to others~\cite{kairouz2021practical,denisov2022improved}. Differential privacy (DP), which typically involves adding noise to sensitive information to guarantee the indistinguishability of outputs~\cite{dwork-29, tsp-9, tsp-11}, is widely recognized as a standard technique for preserving and quantifying privacy. 
Most research on DP federated learning adds privacy-preserving noise independently across iterations, but this noise reduces the utility significantly~\cite{liu2023differentially,bassily2014private}. Recently, some authors have proposed algorithms that use temporally correlated noise to enhance the privacy-utility trade-off in single-machine online learning~\cite{kairouz2021practical,denisov2022improved,koloskova2023convergence,koloskova2024gradient}. However, no theoretical guarantees have been developed for the privacy-utility trade-off when applying correlated noise in online federated learning scenarios. A key difference between online federated learning and single-machine online learning is the use of local updates to improve communication efficiency~\cite{liu2023differentially}. These local updates, combined with streaming non-IID data, make utility analysis more challenging, especially when introducing privacy protection through correlated noise.  

\subsection{Contributions}
Considering an honest but curious server and eavesdroppers, we propose an LDP algorithm that extends temporally correlated noise mechanisms, previously studied in single-machine settings, to OFL. 
Using a perturbed iterate technique, we analyze the combined effect of correlated noise, local updates, and streaming non-IID data. Specifically, we construct a virtual variable by subtracting the DP noise from the actual variable generated by our algorithm and use it as a tool to establish a dynamic regret bound for the released global model.  Subject to an $(\epsilon, \delta)$-LDP budget, we establish a dynamic regret bound over several classes of nonconvex loss functions that quantifies the impact of key parameters and the intensity of changes in the dynamic environment on the learning performance. Numerical experiments validate the efficacy of our algorithm.

\subsection{Related Work}\label{sec-related-work}

To the best of our knowledge, no existing work has developed theoretical guarantees for OFL with local updates,  correlated noise for privacy protection, and nonconvex loss functions. However, several papers have considered partial or related aspects of this problem. For a simple overview, we provide a comparison in Table~\ref{tab} and include a more comprehensive discussion of related work, covering many more papers, below.

\subsubsection{Correlated noise} The use of temporally correlated noise for privacy protection in single-machine online learning has recently been studied by multiple research groups~\cite{kairouz2021practical,denisov2022improved,koloskova2023convergence,koloskova2024gradient,jaja2023almost}. The proposed algorithms can be represented as a binary tree~\cite{dwork2010differential,jain2023price}, where the privacy analysis ensures that the release of the entire tree remains private. 
The study by Kairouz et al.~\cite{kairouz2021practical} utilized the binary mechanism to develop a differentially private variant of the Follow-The-Regularized-Leader (DP-FTRL) algorithm with a provable regret bound. The Google AI blog highlighted the use of DP-FTRL in their deployments~\cite{mcmahan2022federated}. 
In addition to the binary tree mechanism, the matrix factorization (MF) mechanism—originally developed for linear counting queries~\cite{li2015matrix}—can also be used to construct temporally correlated noise. In fact, the binary tree mechanism is a specific instance of the more general MF approach, whose additional flexibility can be used to improve the utility-privacy trade-off even further~\cite{li2015matrix,koloskova2023convergence,jaja2023almost}. For example, Denisov et al.~\cite{denisov2022improved} proposed an optimization formulation for the matrix factorization that they could solve using a fixed-point algorithm, and observed that MF-based stochastic gradient descent significantly improves the privacy-utility trade-off compared to a traditional binary tree mechanism. However,  these findings are primarily empirical. Henzinger et al.~\cite{jaja2023almost} proposed an MF mechanism based on Toeplitz matrices, whose elements can be explicitly iteratively solved. This approach not only improves the regret bound of the binary tree mechanism in~\cite{kairouz2021practical} by a constant factor but also provides a theoretical explanation for the empirical improvements observed by Denisov et al.~\cite{denisov2022improved}.

As highlighted in~\cite{dwork2010differential,denisov2022improved,jain2023price}, using correlated noise in online learning, unlike offline learning, requires consideration of adaptive continual release. Continual release refers to a privacy-preserving mechanism that handles both streaming inputs and outputs. In~\cite{denisov2022improved,jain2023price}, the inputs are streaming gradients computed from streaming raw data, while the outputs are the noisy versions of linear queries on these gradients. The privacy of the raw data must be preserved when all streaming outputs are continuously observed. Moreover, one should also consider adaptive inputs since the point at which the gradient is computed is related to previous outputs. Some studies have shown that both binary tree and MF mechanisms can handle adaptive continual release~\cite{denisov2022improved,jain2023price}. However, extending this approach to design LDP online FL with local updates using correlated noise is challenging and remains unexplored.

\subsubsection{Online distributed learning}

LDP with independent noise has been explored for decentralized online learning in~\cite{cdc2021online,li2018differentially,xiong2020privacy,cheng2023distributed,liu2023differentially}. This setting includes the server-learner scenario as a special case, albeit without multiple local updates. In particular, Liu et al.\cite{liu2023differentially} aggregate a mini-batch of gradients to perform a single local update, meaning the local model is updated only once per communication round. In contrast, our approach involves multiple local updates per communication round, similar to FL algorithms like FedAvg\cite{mcmahan2017communication}, where local models undergo several updates before communication. The work in~\cite{cdc2021online} also considers OFL with local updates but lacks DP protection. Both \cite{liu2023differentially} and \cite{cdc2021online} focus on static regret for convex OFL. By contrast, our algorithm introduces correlated noise and multiple local updates, requiring a different design and analyses to establish dynamic regret bounds for nonconvex problems.

\subsubsection{Dynamic regret for nonconvex problems} 
Even in a single-machine setting without privacy protection, establishing dynamic regret bounds for nonconvex problems requires new analytical techniques. On the one hand, compared with static regret bounds, dynamic regret bounds are stricter and more suited to scenarios with dynamic changes in the environment. However, achieving a sublinear dynamic regret is difficult even under strong convexity. Intuitively, when the environment changes rapidly, online learning faces greater challenges in achieving high utility. On the other hand, establishing a sublinear regret for nonconvex problems, even for static regret, presents significant challenges~\cite[Proposition 3]{suggala2020online}. 
The paper~\cite{suggala2020online} studies static regret for general nonconvex problems but requires an offline algorithm oracle to minimize the aggregated loss. Nonconvex online learning has been studied under special conditions on the loss functions. The work \cite{gao2018online} considers weakly pseudo convex objective functions and establishes dynamic regret bounds. The work \cite{zhang2017improved} considers semi-strongly convex objectives and improves the dynamic regret bound but needs data to be repeatedly used. By contrast, we consider the case when data is only used once and establish a dynamic regret bound for a class of nonconvex problems. Compared to~\cite{gao2018online,zhang2017improved}, novel analytical approaches are required to manage the correlated noise and local updates.

{\bf Notation.} Unless otherwise specified, all variables are $d$-dimensional row vectors. Accordingly,  loss functions map  $d$-dimensional row vectors to real numbers. 
The Frobenius norm of a matrix is denoted by $\|\cdot\|_F$, and the $\ell_2$-norm of a row vector is represented by $\|\cdot\|$. 
The notation $[n]$ refers to the set $\{1,\ldots, n\}$, and  $\operatorname{P}_{\mathcal{X}^{\star}}^x$ denotes the projection of $x$ onto the set $\mathcal{X}^{\star}$.
We use $\operatorname{Pr}$ to denote the probability of a random event and $\bE$ for the expectation. 
We define $\bA\in \mathbb{R}^{R\tau\times R\tau}$ as a lower triangle matrix with 1's on and below the diagonal and ${\bf I}$ as the identity matrix.
Given constants $R$, $\tau$, and $W$, we define matrices $\bG_i$, $\bB$, $\bC$, and $\bxi_i$ as 
\begin{equation*}
\underbrace{
\begin{bmatrix}
\nabla f_i^{0,0}\\
\ldots\\
\nabla f_i^{0,\tau-1}\\
\ldots\\
\nabla f_i^{R-1,0}\\
\ldots\\
\nabla f_i^{R-1,\tau-1}
\end{bmatrix}
}_{\bG_i\in\mathbb{R}^{R\tau\times d}}
\underbrace{
\begin{bmatrix}
 b^{0,0}\\
\ldots\\
b^{0,\tau-1}\\
\ldots\\
 b^{R-1,0}\\
\ldots\\
 b^{R-1,\tau-1}
\end{bmatrix}
}_{\bB\in \mathbb{R}^{R\tau\times W}}
\underbrace{
\begin{bmatrix}
 c^{0,0}\\
\ldots\\
c^{0,\tau-1}\\
\ldots\\
 c^{R-1,0}\\
\ldots\\
c^{R-1,\tau-1}
\end{bmatrix}
}_{\bC^T\in \mathbb{R}^{R\tau\times W}}
\underbrace{
\begin{bmatrix}
 \xi_i^1\\
\xi_i^2\\
\ldots\\
\xi_i^W
\end{bmatrix}
}_{\bxi_i\in \mathbb{R}^{W\times d}}.
\end{equation*} 
The notation $\bxi_i\sim \mathcal{N}(0, V_i^2)^{W\times d}$ indicates that all entries of $\bxi_i$ are independent and follow the Gaussian distribution $\mathcal{N}(0, V_i^2)$.
 

\section{Problem Formulation}\label{sec-setting}

{\bf Online federated learning.}  As shown in Fig.~\ref{fig-ofl}, our setting comprises one  server and $n$ learners, where each learner $i\in [n]$ interacts with streaming clients that arrive sequentially. We refer to the model parameters on the server as the global model and the models on the learners as local. The server's task is to coordinate all learners in online training of the global model, which is continuously released to the clients to provide instant service. To enhance communication efficiency, learner \(i\) performs \(\tau\) steps of local updates, each step using data from a different client, before sending the updated model to the server. To describe this intermittent communication, we define the entire time horizon as $\{0, 1, \ldots, \tau-1, \ldots, (R-1)\tau, \ldots, R\tau-1\}$, with communication occurring at time step $ \{r\tau: r\in[R]-1\}$. In this setup, there are $R$ communication rounds, each separated by $\tau$ steps.
The utility of the sequence of global models $\{x^r\}_r$ is measured by the dynamic regret
\begin{equation}\label{eq-dynm}
\reg:=\sum_{r=0}^{R-1}\sum_{t=0}^{\tau-1} \frac{1}{n}\sum_{i=1}^{n}(f_i^{r,t}(x^r)-(f^r)^{\star}). 
\end{equation}
Here, $f_i^{r,t}(x^r)$ is the loss incurred by the global model $x^r$ on data $D_i^{r,t}$ and  
$(f^r)^{\star}$ is a dynamic optimal loss defined by
\begin{equation*}
(f^r)^{\star}:=\min_{x}  f^r(x):=\frac{1}{n\tau}\sum_{i=1}^n\sum_{t=0}^{\tau-1} f_i^{r,t} (x).
\end{equation*}
The term \textit{dynamic} refers to a regret measure that compares the loss incurred by our algorithm to a sequence of time-varying optimal losses, as opposed to the commonly used \textit{static} regret
\begin{equation*}
\regs:=\sum_{r=0}^{R-1}\sum_{t=0}^{\tau-1} \frac{1}{n}\sum_{i=1}^{n}(f_i^{r,t}(x^r)-f^r(x^{\star})), 
\end{equation*}
where $x^{\star}$ represents an optimal model, in the optimal solution set $ \mathcal{X}^{\star}$  that minimizes the cumulative loss over entire data, 
\begin{equation*}
x^{\star}\in \mathcal{X}^{\star} := \operatorname{argmin}_x\sum_{r=0}^{R-1}\sum_{t=0}^{\tau-1} \frac{1}{n}\sum_{i=1}^{n}f_i^{r,t}(x).  
\end{equation*} 
{$\regs$, which compares an algorithm’s performance to a single, globally optimal model,
is reasonable when all data is available in advance.
In contrast, $\reg$ compares against a sequence of optimal models
and accounts for how the optimal solution may shift under changing conditions.
This is more stringent, but also more relevant in many OFL settings \cite{jiang2022distributed,eshraghi2022improving}.
For example, in disease prediction, the best predictor may vary with season , and in recommendation systems,
user preferences often evolve over time.}

In our paper, we aim to learn a series of models $\{x^r\}_r$ that minimizes $\reg$ while satisfying privacy constraints. 

{\bf Privacy threat model.}
We consider an honest-but-curious server and eavesdroppers capable of intercepting the communication between the server and the learners, as illustrated in Fig.~\ref{fig-ofl}. To protect privacy, each learner adds temporally correlated noise locally {at each local update} before transmitting information to the server. As a result, the noise across time is not independent. We aim to guarantee local differential privacy of each client’s data, even if the exchanged information is observed by attackers, i.e., the server and the eavesdroppers. When a client with data $D_i^{r,t}$ arrives, the learner obtains the client's data and calculates the gradient $\nabla f_i^{r,t}$ to update the local model once, and then discards the data without storing it. We assume that the original client data provided to the learner is not accessible to the attackers, as this process is not public.  Since our algorithm transmits local gradient information, we use the MF mechanism for each learner $i$ to add correlated noise to each local gradient.  
{Specifically, the privacy protection mechanism in our algorithm has streaming inputs \(\{\nabla f_i^{0,0}, \ldots, \nabla f_i^{r,t}\}\)  (\emph{i.e.}, all gradients processed by learner $i$ so far) and streaming outputs consisting of noisy prefix sums  \(\{\nabla f_i^{0,0},\nabla f_i^{0,0}+\nabla f_i^{0,1},\ldots, \nabla f_i^{0,0}+\cdots+\nabla f_i^{r,t}\}\), which refers to continual release.} In addition, inputs are adaptive, meaning that the next input {\(\nabla f_i^{r,t+1}\)} depends on previous outputs. This requires us to account for a more powerful attacker who can influence input selection; nevertheless, our algorithm remains LDP under adaptive continual release.

We quantify privacy leakage via LDP. We define the aggregated dataset of learner $i$ over the entire time horizon as $D_i = \{D_{i}^{r,t}: r\in[R]-1, t\in[\tau]-1\}$. LDP is used over \textit{neighboring} datasets $D_i$ and $D^{\prime}_i$ that differ by a single entry (for instance, replacing $D_i^{r,t}$ by ${D_i^{r,t}}^{\prime}$). We use the following LDP definition:
\begin{definition}
A randomized algorithm $\mathcal{M}$ satisfies $(\epsilon, \delta)$-LDP if for any pair of neighboring datasets $D_i$ and $D_i^{\prime}$, and for any set of outcomes $O$ within the output domain of $\mathcal{M}$,
\begin{equation*}
\operatorname{Pr}[\mathcal{M}(D_i) \in {O}] \leq e^\epsilon \operatorname{Pr}\left[\mathcal{M}\left(D_i^{\prime}\right) \in {O}\right]+\delta.    
\end{equation*}
The level of privacy protection is quantified by two parameters $(\epsilon,\delta)$ where smaller values indicate stronger protection.
\end{definition}

\section{Algorithm}
In this section, we present the proposed algorithm and a privacy-preserving mechanism that utilizes correlated noise via matrix factorization.

\subsection{Proposed Algorithm}
We propose a locally differential private OFL algorithm  
outlined in Algorithm~\ref{alg-fl}.  
Key features of our algorithm include the use of temporally correlated noise to protect privacy and the use 
of local updates to reduce communication frequency between the server and learners. Mathematically, the proposed algorithm can be re-written as the updates
\begin{equation}\label{eq-comp-alg}
\left\{
\begin{aligned}
z_{i}^{r,t+1}& = 
z_{i}^{r,t}-\eta \left({\nabla} f_i^{r,t}(z_i^{r,t}) + (b^{r,t}-b^{r,t-1}) \bxi_i\right),\\
z_{i}^{r,\tau}& = 
x^r-\eta \tau \left(g_i^r+\frac{1}{\tau}(b^{r,\tau-1}-b^{r-1,\tau-1})\bxi_i \right),\\
x^{r+1}& =x^r - \teta \left( g^r + \frac{1}{\tau}(b^{r,\tau-1}-b^{r-1,\tau-1})\bxi \right),
\end{aligned}
\right.
\end{equation}
where $g_i^r=\frac{1}{\tau} \sum_{t=0}^{\tau-1}\nabla f_i^{r,t}(z_i^{r,t})$, $g^r=\frac{1}{n}\sum_{i=1}^n g_i^r$, and $\bxi=\frac{1}{n} \sum_{i=1}^n \bxi_i$. 
{The equivalence of \eqref{eq-comp-alg} and Algorithm \ref{alg-fl} is derived in Supplementary. }

Inspired by the single-machine online learning, we implement Line 9 in Algorithm~\ref{alg-fl} using the MF mechanism. 
\begin{algorithm}[h]
\caption{Proposed Algorithm}
\label{alg-fl}
\begin{algorithmic}[1]
\State $\textbf{Input:}$ $R, \tau, \eta, \eta_g$, $\bxi_i$, $x^{0}$, {$b^{0,-1}=0$, $b^{r,-1}=b^{r-1,\tau-1}$} 
\State Set $\teta=\eta \eta_g \tau$
\For {$r = 0, 1, \ldots, R-1$ }
\State {\bf Learner $i$}
\State Receive $x^r$ from the server
\State Set $z_{i}^{r,0}=x^r$
\For {$ t= 0, 1, \ldots, \tau-1$ }
\State {Compute $\nabla f_i^{r,t}:=\nabla f_i^{r,t}(z_i^{r,t})$}
\State {Use MF to obtain $$S_{i}^{r,t}:=\nabla f_i^{0,0}+\cdots+\nabla f_i^{r,t}+b^{r,t}\bxi_i$$}
\State {Set $\hat{\nabla} f_i^{r,t}=S_{i}^{r,t}-S_{i}^{r,t-1}$}
\State { Update 
$z_{i}^{r,t+1}= 
z_{i}^{r,t}-\eta \hat{\nabla} f_i^{r,t}$ }
\EndFor
\State {Set $\hat{g}_i^r:=\frac{1}{\eta\tau}(x^r-z_i^{r,\tau}) =\frac{1}{\tau} \sum_{t=0}^{\tau-1}\hat{\nabla} f_i^{r,t}$  }
\State Transmit $\hat{g}_i^{r}$ to the server
\State {\bf Server}
\State Update  $x^{r+1}=x^r-\teta\frac{1}{n}\sum_{i=1}^n \hat{g}_{i}^{r}$ 
\EndFor
\State {\bf Output:} $\{x^1,\ldots, x^R\}$ 
\end{algorithmic}
\end{algorithm}

\subsection{Adding Correlated Noise via MF}

MF has recently been used to generate correlated noise to enhance utility and privacy of single-machine OL~\cite{kairouz2021practical,jaja2023almost}. These papers  assume $x^0 = 0$ and express the iterates of a gradient algorithm as $x^{r+1} = -\,\eta\sum_{\tilde{r}=0}^{r} g^{\tilde{r}}$, $\forall\,r\in [R]-1$, where $g^{\tilde{r}}$ is the gradient direction at iteration $\tilde{r}$. Consequently, the key DP objective is to estimate the prefix sums $\{\,g^0,\,g^0 + g^1,\,\ldots,\,g^0 + \cdots + g^{R-1}\}$ over the individual gradients.
Due to the distributed nature of FL and its use of local updates, this approach can not be applied directly to our setting. When learner $i$ updates its local model via $z_i^{r,\tau} = x^r - \eta\,\tau\,g_i^r$ (omitting noise for clarity), it begins from a global parameter $x^r$ that incorporates other learners’ updates, so $z_i^{r,\tau}$ cannot be viewed as a simple prefix sum of $g_i^r$.

Instead, a new approach is needed. In our design, we focus on the difference $\hat{\nabla}f_i^{r,t} = S_i^{r,t} - S_i^{r,t-1}$ 
(Line 10 in Algorithm \ref{alg-fl}) to enable correlated noise injection and preserve LDP.

In the following, we show that our algorithm can be interpreted as post-processing \cite{dwork-29} of {$\{\nabla f_i^{0,0}+b^{0,0}\bxi_i, \ldots, \nabla f_i^{0,0}+\cdots+\nabla f_i^{R-1,\tau-1}+b^{R-1,\tau-1}\bxi_i\}$. 
With $b^{-1,\tau-1}=b^{0,-1}=0$ and $x^0=0$, repeated application of the last step in~\eqref{eq-comp-alg} yields
\begin{equation*}
\begin{aligned}
x^{r}&=-\teta \left( \sum_{\tr=0}^{r-1} g^{\tr} +\frac{1}{\tau}b^{r-1,\tau-1}\bxi\right)\\
&=-\teta \frac{1}{n}\sum_{i=1}^{n}\left( \sum_{\tr=0}^{r-1} g_i^{\tr} +\frac{1}{\tau}b^{r-1,\tau-1}\bxi_i\right)\\
&=-\teta \frac{1}{n}\sum_{i=1}^n \frac{1}{\tau}\left( \nabla f_i^{0,0}+\cdots+\nabla f_i^{r-1,\tau-1} + b^{r-1,\tau-1}\bxi_i \right).    
\end{aligned} 
\end{equation*}}
With this equality, we observe that both the transmitted variables, $x^r$ and $\hat{g}_i^r$ in Lines 5 and 14 of Algorithm~\ref{alg-fl}, respectively, are post-processed versions of the noisy prefix sums.  It is, therefore, sufficient to release noisy prefix sums privately. To this end, we use MF. For mathematical clarity, we arrange the {$R\tau$} entries of {$\{\nabla f_i^{0,0}+b^{0,0}\bxi_i, \ldots, \nabla f_i^{0,0}+\cdots+\nabla f_i^{R-1,\tau-1}+b^{R-1,\tau-1}\bxi_i\}$} as the $R\tau$ rows of an {\(R\tau \times d\)} matrix, resulting in  
\begin{equation}\label{eq-3} 
\mathbf{A} \mathbf{G}_i + \mathbf{B} \bxi_i,  
\end{equation}
where $\mathbf{A}$ is a lower triangular matrix with all entries on and below the diagonal equal to $1$. Although each entry of $\bxi_i$ is independent, the multiplication by the matrix ${\bf B}$ introduces correlations among the rows of ${\bf B}\bxi_i$ which complicates the privacy analysis. A strategic approach to address this is to decompose the matrix ${\bf A}$ as ${\bf A} = {\bf BC}$, and use ${\bf B}$ to construct temporally correlated noise. By substituting ${\bf {\bf A}}={\bf BC}$ into~\eqref{eq-3} and factoring out ${\bf B}$, we have
\begin{equation}\label{eq-abc} 
\mathbf{B} (\bC\mathbf{G}_i + \bxi_i).  
\end{equation}
Here, the noise $\bxi_i$ with iid entries is added to ${\bC\bG_i}$. The privacy loss of~\eqref{eq-abc} can then be interpreted as the result of post-processing following a single application of the Gaussian mechanism~\cite{denisov2022improved}.

Below, we present three state-of-the-art methods for implementing MF: (i) the binary tree mechanism, (ii) solving MF with optimization techniques, and (iii) using Toeplitz matrices.
  
\begin{figure*}
\begin{equation}\label{eq-bc}
\underbrace{
\begin{bmatrix}
\nabla f_i^{0,0} \\
\nabla f_i^{0,0}+\nabla f_i^{0,1}\\
\nabla f_i^{0,0}+\nabla f_i^{0,1}+\nabla f_i^{1,0}\\
\nabla f_i^{0,0}+\nabla f_i^{0,1}+\nabla f_i^{1,0}+\nabla f_i^{1,1}\\
\end{bmatrix}
}_{\bA {\bG}_i}
+
\underbrace{\scalebox{1}{$
\begin{bmatrix}
\xi_i^1 \\
\xi_i^3\\
\xi_i^3+\xi_i^4\\
\xi_i^7\\
\end{bmatrix}$ }
 }_{{\bB \bxi}_i}
=\underbrace{
\scalebox{0.8}{$
\begin{bmatrix}
1&0&0&0&0&0&0\\ 
0&0&1&0&0&0&0\\  
0&0&1&1&0&0&0\\
0&0&0&0&0&0&1\\
\end{bmatrix} 
$}
}_{\bB}
\Bigg(
\underbrace{
\scalebox{0.7}{$
\begin{bmatrix}
1&0&0&0\\ 
0&1&0&0\\  
1&1&0&0\\ 
0&0&1&0\\ 
0&0&0&1\\  
0&0&1&1\\ 
1&1&1&1\\ 
\end{bmatrix}
$}
}_{\bC}
\underbrace{
\scalebox{1}{$\begin{bmatrix}
\nabla f_i^{0,0}\\
\nabla f_i^{0,1}\\
\nabla f_i^{1,0}\\
\nabla f_i^{1,1}
\end{bmatrix}$}}_{{\bG}_i}
+ 
\underbrace{
\scalebox{0.7}{$
\begin{bmatrix}
\xi_i^1\\
\xi_i^2\\
\xi_i^3\\
\xi_i^4\\
\xi_i^5\\
\xi_i^6\\
\xi_i^7\\
\end{bmatrix}$ }
}_{{\bxi}_i}\Bigg). 
\end{equation}
{\noindent}\rule[-12pt]{17.8cm}{0.1pt}
\end{figure*}

\begin{figure}
\centering
\includegraphics[width=8cm]{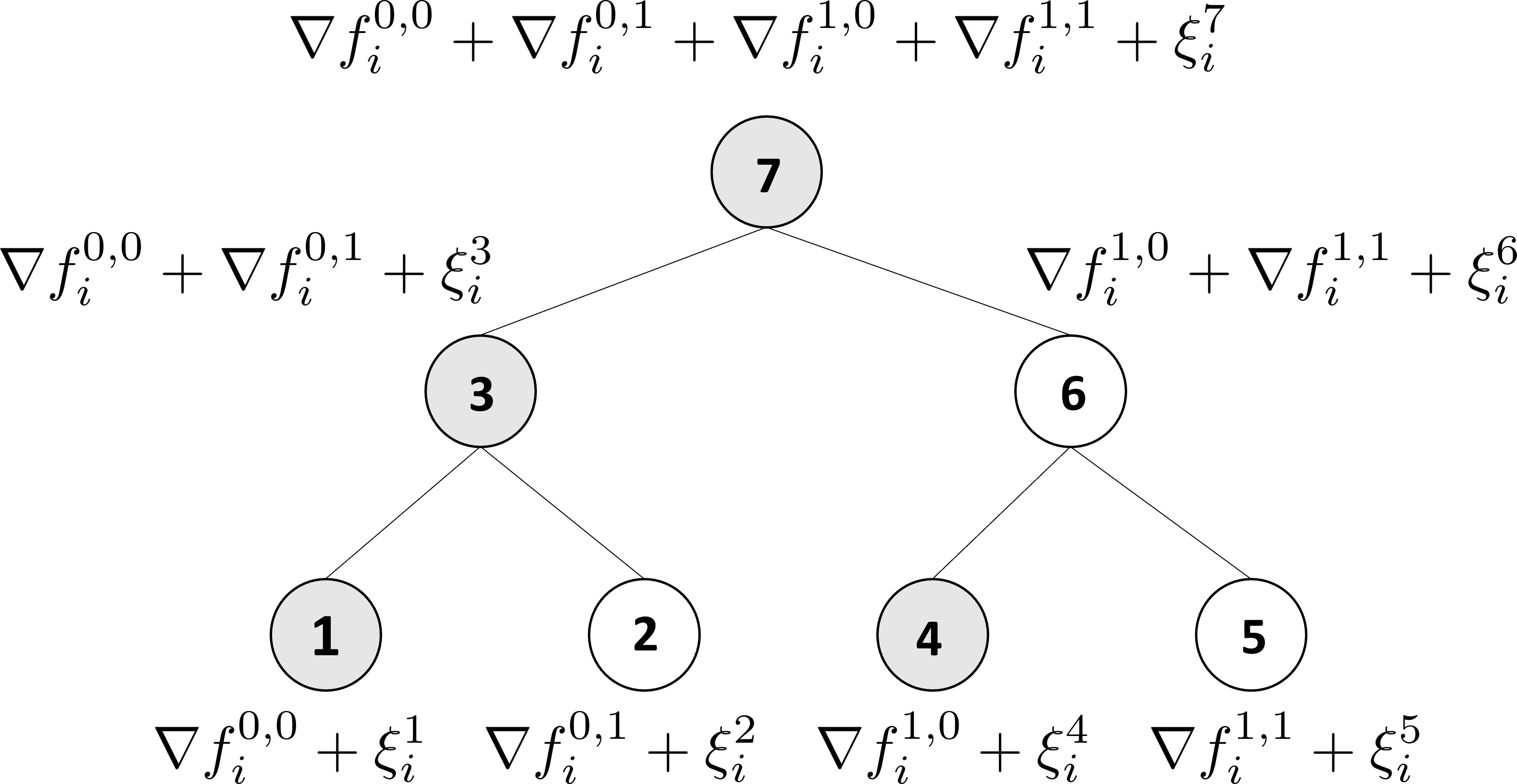}
\caption{Binary tree mechanism}
\label{fig:tree}
\end{figure}

\paragraph*{\bf Example (i)}
The binary tree mechanism releases differentially private prefix sums based on selected information computed hierarchically. In the binary tree, each leaf node stores an input value, while internal nodes store the sum of their left and right children. To ensure privacy, zero-mean Gaussian noise with variance $V_i^2$ is added when a node releases its stored value. The prefix sums are estimated from the outputs of a subset of the nodes. Fig.~\ref{fig:tree} illustrates the case of $R=\tau=2$ and $W=7$ nodes. Here, sequentially releasing  the 1st node, the 3rd node, the sum of the 3rd and 4th nodes, and the 7th node (shown as dark nodes), allows us to estimate the prefix sums. 

Although the noise added at each node is independent, the noise in the prefix sums will be correlated, as seen in (\ref{eq-bc}). This correlation can improve utility. The number of ones in each column \(c^{r,t}\) of \(\bC\) represents how many times the input \(\nabla f_i^{r,t}\) appears across all nodes, which is at most \(\log_2(R\tau) + 1\). Meanwhile, the number of ones in each row \(b^{r,t}\) of \(\bB\) corresponds to the number of dark nodes used to estimate the prefix sum \(\nabla f_i^{0,0} + \cdots + \nabla f_i^{r,t}\), which is at most \(\log_2(R\tau)\). This leads to the bounds:
\begin{equation}\label{eq-bc-logR}
\|c^{r,t}\|^2 \le \log_2 ({R\tau}) + 1,~ \|b^{r,t}\|^2 \le \log_2 ({R\tau}), ~\forall r,t.
\end{equation}

From \( \bC \), we can calculate the noise variance $V_i^2$ added to each node in the tree to satisfy a fixed privacy budget, while  \( \bB \) allows us to quantify the impact of noise on the utility.

\begin{remark}[Comparison with independent noise]
For ease of comparison, we also use a special tree to realize the addition of independent noise. This tree has only leaves, i.e., the height of the tree is 0. In this case,  $\bC=\bf I$ and $\bB=\bA$. When a leaf changes, it affects only one in the tree. However, since all leaves are dark nodes, we must sum $R\tau$ leaves to estimate $\nabla f_i^{0,0}+\cdots+\nabla f_i^{R-1,\tau-1}$. This means that $ \|c^{r,t}\|^2=1, \forall r,t,$ and $\|b^{R-1,\tau-1}\|^2=R\tau$. Intuitively, compared to correlated noise, using independent noise adds less variance ($1$ vs $\log_2 (R\tau)+1$) to each node in the tree but introduces more noise overall ($R\tau$ vs $\log_2 (R\tau)$), resulting in worse utility. We will formally prove the advantage of correlated noise over independent noise in Corollary~\ref{lem-u-IID}. 
\end{remark}

\paragraph*{\bf Example (ii)}
The binary tree is a special case of MF, which offers more flexibility and the possibility of optimizing the factors \( \bB \) and \( \bC \) to improve  performance~\cite{kairouz2021practical,denisov2022improved}.

For instance, the minimal \(\ell_2\)-norm solution for the linear equation \(\bA = \bB \bC\) is given by \( \bA \bC^{\dagger} \), where \( \bC^{\dagger} \) is the Moore-Penrose pseudo-inverse of \( \bC \). Denisov et al.~\cite{denisov2022improved} therefore proposed to construct  the matrix factors \( \bB, \bC \in \mathbb{R}^{R\tau \times R\tau} \) by solving the following optimization problem:
\begin{equation}\label{eq-ABC}
\min_{\bC \in \mathbb{R}^{R\tau \times R\tau} }~ \|\bA \bC^{\dagger}\|_F^2, \quad \operatorname{s.t.}\; \bC \in \mathbf{V}, ~ \max_{r,t}~ \|c^{r,t}\| = 1,
\end{equation}
where \(\mathbf{V}\) is a linear space of matrices. A fixed-point algorithm to solve~\eqref{eq-ABC} was given in~\cite[Theorem 3.2]{denisov2022improved}.
Note that the factorization only requires prior knowledge of $R$ and $\tau$, and can be calculated offline before the algorithm begins.

\paragraph*{\bf Example (iii)}
The optimization formulation~\eqref{eq-ABC} includes a constraint \(\max_{r,t} \|c^{r,t}\| = 1\) to limit the sensitivity and uses an objective function to minimize \(\|\bB\|_F^2\). 
Empirically, this leads to higher utility, but it is challenging to derive theoretical bounds on  $\|\bB\|_F^2$. To address this issue, \cite{jaja2023almost} proposed to use the following  Toeplitz matrix construction for  $\bB$ and $\bC$: 
\[
\bB = \bC = \begin{bmatrix}
h(0) & 0 & \cdots & 0\\
h(1) & h(0) & \cdots & 0\\
\vdots & \vdots & \ddots & \vdots\\
h(R\tau-2) & h(R\tau-3) & \cdots & 0\\
h(R\tau-1) & h(R\tau-2) & \cdots & h(0)\\
\end{bmatrix},
\]
where \(h(j) = \begin{cases}
1, & j=0,    \\
\left(1-\frac{1}{2j}\right) h(j-1),& j\ge 1. 
\end{cases}\)
Thus, both \(\bB\) and \(\bC\) are Toeplitz matrices with all diagonal entries equal to $1$.
Furthermore, \cite[Section 5.1]{jaja2023almost} proved that 
\begin{equation}\label{eq-exmp-iii}
\begin{aligned}
& \|c^{r,t}\|^2\le\|c^{0,0}\|^2  \le 1+\frac{1}{\pi} \ln \left( \frac{4R\tau}{5}\right),\\
&\|b^{r,t}\|^2\le \|b^{R-1,\tau-1}\|^2 \le  1+\frac{1}{\pi} \ln \left( \frac{4R\tau}{5}\right),~\forall r, t,
\end{aligned}    
\end{equation}
which is of similar order as the results of \eqref{eq-bc-logR} derived from the binary tree method.

As noted in Section~\ref{sec-related-work}, Denisov et al.~\cite{denisov2022improved} found that the MF mechanism in Example (ii) outperforms the binary-tree method in Example (i) experimentally. Henzinger et al.~\cite{jaja2023almost} provided a theoretical explanation for this, showing that the MF mechanism in Example (iii) achieves a constant improvement over Example (i). In this paper, we use Example (iii) to construct upper bounds for \(\bB\) and \(\bC\) in our analysis and compare Examples (i), (ii), and (iii) in our experiments.

\section{Analysis}
We will now derive a dynamic regret bound for Algorithm \ref{alg-fl} solving a class of nonconvex problems subject to $(\epsilon,\delta)$-LDP. 
\subsection{Preliminaries}
We impose the following assumptions on the loss functions.    
\begin{assumption}\label{asm-smooth}
Each loss function $f_i^{r,t}$ is  $L$-smooth, i.e.,
for any $x,y$, there exists a constant $L$ such that
\begin{equation*}
f_i^{r,t} (y)\leq f_i^{r,t} (x)+ \langle \nabla f_i^{r,t} (x), y-x\rangle + \frac{L}{2}\|y-x\|^2.
\end{equation*}	
\end{assumption}

\begin{assumption}\label{asm-Bg}
Each loss function $f_i^{r,t}$ has bounded gradient, i.e., for any $x$, there exists a constant $B_g$ such that 
\begin{equation*}
\|\nabla f_i^{r,t}(x) \|\le B_g. 
\end{equation*}
\end{assumption}

\begin{assumption}\label{asm-regular}
For any $x^{r}$, $x^r_{\xi}$, there exists a constant $\sigma$ such that
$\|\operatorname{P}_{\mathcal{X}^{\star}}^{x^{r}}-\operatorname{P}_{\mathcal{X}^{\star}}^{x_{\xi}^{r}}\|\leq \sqrt{\sigma}\|x^r-x^r_{\xi}\|$.    
\end{assumption}

\begin{assumption}\label{asm-noncvx}
Consider the aggregated loss function $f^r$
and its optimal solution set $\mathcal{X}_r^{\star}:=\operatorname{argmin}_{x} f^r(x)$. For any $x$, there exists  constants $\alpha$ and $\mu$ such that 
\begin{equation*}
\alpha( f^r(x)-(f^r)^\star)+ \langle \nabla f^r (x), \operatorname{P}_{\mathcal{X}_r^{\star}}^{x} -x\rangle + \frac{\mu}{2}\|\operatorname{P}_{\mathcal{X}_r^{\star}}^{x} -x\|^2\leq 0.   
\end{equation*}
\end{assumption}

Assumption~\ref{asm-smooth} is standard in the optimization literature.  Assumption~\ref{asm-Bg} is frequently invoked in DP research to ensure bounded sensitivity~\cite{wei2023personalized,seif2020wireless}, and it is consistent with Lipschitz continuity of  $f_i^{r,t}$ which is often assumed in the online learning literature~\cite{denisov2022improved, cdc2021online}. Assumption~\ref{asm-regular} is a regularity condition that is necessary for our analysis since $\mathcal{X}^{\star}$ for a nonconvex problem may not be convex. 
We focus on a class of nonconvex problems that satisfy Assumption~\ref{asm-noncvx}. 
Some examples that satisfy Assumptions IV.3 and IV.4 can be found in \cite{zhou2017unified,li2018calculus}.
Below, we provide relevant examples of such problems using the following definitions.  
\begin{definition} 
For constants $\mu_{\rm QSC},\mu_{\rm WC}, c_{\rm PL}, c_{\rm EB}, c_{\rm QG}$, we introduce the following conditions of loss functions $f^r$:
\begin{itemize}
\item Quasi Strong Convexity (QSC)~\cite{necoara2019linear,zhang2013gradient}
\begin{equation*}
(f^r)^\star \ge f^r(x)+ \langle \nabla f^r (x), \operatorname{P}_{\mathcal{X}_r^{\star}}^{x} -x\rangle + \frac{\mu_{\rm QSC}}{2}\|\operatorname{P}_{\mathcal{X}_r^{\star}}^{x} -x\|^2.   
\end{equation*} 
\item Weak Convexity (WC)
\begin{equation*}
f^{r} (y)\geq f^{r} (x)+ \langle \nabla f^{r} (x), y-x\rangle - \frac{\mu_{\rm WC}}{2}\|y-x\|^2.
\end{equation*}
\item { Polyak-\L ojasiewicz Inequality (P\L)} 
\[ c_{\rm PL}\cdot(f^{r} (x)-(f^r)^\star)\leq \frac{1}{2}\|\nabla f^r(x)\|^2. \]
\item {Error Bound (EB)} 
$$c_{\rm EB}\cdot\dist(x,\mathcal{X}^\star_r)\leq	\|\nabla f^r(x)\|.$$
\item { Quadratic Growth Condition (QG)} 
$$\frac{c_{\rm QG}}{2}\cdot \dist^2(x,\mathcal{X}^\star_r)	\le f^{r} (x)-(f^r)^\star.$$
\end{itemize}
\end{definition}
If $f^r$ satisfies QSC, then Assumption~\ref{asm-noncvx} holds with $\alpha=1,\mu=\mu_{\rm QSC}$. It is well-known that, under the $L$-smoothness condition of Assumption~\ref{asm-smooth}, the $\rm P\L$, EB, and QG conditions are weaker than QSC. To illustrate this, we provide a quantitative relationship between the conditions QSC, $\rm P\L$, EB, and QG.
\begin{lemma}[Theorem~2 in \cite{karimi2016linear}]\label{lem:equi}
The aggregated loss function $f^r$ satisfies the following implications:
\[
\text{QSC} \Rightarrow \text{EB} \quad\text{and}\quad  \text{P{\L}} \Rightarrow \text{QG}
\]
with $c_{\rm EB}=\mu_{{\rm QSC}}$ and $c_{\rm QG}=c_{\rm PL}/2$.
If $f^r$ is $L$-smooth,
then $\text{EB} \Rightarrow \text{P{\L}}$ with $c_{\rm PL}=c^2_{\rm EB}/L$.
\end{lemma}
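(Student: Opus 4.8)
The plan is to prove each of the three implications separately by working with the defining inequality at an arbitrary point $x$ and its projection $x_p := \operatorname{P}_{\mathcal{X}_r^{\star}}^{x}$ onto the optimal set, following the argument behind Theorem~2 in \cite{karimi2016linear}. Throughout I would use that $\dist(x,\mathcal{X}_r^{\star}) = \|x - x_p\|$ and that, since $x_p$ minimizes the smooth function $f^r$, the first-order condition gives $\nabla f^r(x_p) = 0$.

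For QSC $\Rightarrow$ EB the argument is purely algebraic. I would rewrite the QSC inequality as $0 \ge (f^r(x) - (f^r)^{\star}) + \langle \nabla f^r(x), x_p - x\rangle + \tfrac{\mu_{\rm QSC}}{2}\|x_p - x\|^2$ and discard the nonnegative gap $f^r(x) - (f^r)^{\star} \ge 0$ to obtain the restricted secant inequality $\langle \nabla f^r(x), x - x_p\rangle \ge \tfrac{\mu_{\rm QSC}}{2}\|x - x_p\|^2$. Bounding the left side by Cauchy--Schwarz and cancelling one factor of $\|x - x_p\|$ then yields $\|\nabla f^r(x)\| \ge c_{\rm EB}\,\dist(x,\mathcal{X}_r^{\star})$, which is EB.

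For EB $\Rightarrow$ PL I would bring in $L$-smoothness. Expanding $f^r(x)$ about $x_p$ through Assumption~\ref{asm-smooth} and using $\nabla f^r(x_p)=0$ gives the upper growth bound $f^r(x) - (f^r)^{\star} \le \tfrac{L}{2}\dist^2(x,\mathcal{X}_r^{\star})$. Substituting the EB bound $\dist^2(x,\mathcal{X}_r^{\star}) \le \|\nabla f^r(x)\|^2/c_{\rm EB}^2$ and rearranging produces $\tfrac{c_{\rm EB}^2}{L}(f^r(x)-(f^r)^{\star}) \le \tfrac12\|\nabla f^r(x)\|^2$, i.e. PL with $c_{\rm PL} = c_{\rm EB}^2/L$.

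The step I expect to be the main obstacle is PL $\Rightarrow$ QG, since it does not reduce to a single algebraic manipulation and instead needs an integration argument. I would study the gradient flow $\dot y(t) = -\nabla f^r(y(t))$ with $y(0)=x$ and set $\phi(t) = \sqrt{f^r(y(t)) - (f^r)^{\star}}$; PL implies $-\dot\phi(t) \ge c\,\|\nabla f^r(y(t))\|$ for an explicit $c>0$, so integrating bounds the trajectory length $\int_0^{\infty}\|\dot y(t)\|\,dt$ by a multiple of $\sqrt{f^r(x)-(f^r)^{\star}}$. Because the flow converges to a point of $\mathcal{X}_r^{\star}$, this length dominates $\dist(x,\mathcal{X}_r^{\star})$, and squaring gives QG with the constant recorded in the statement. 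The delicate points are justifying convergence of the flow into $\mathcal{X}_r^{\star}$ and treating the trivial case $f^r(x)=(f^r)^{\star}$, where $\phi \equiv 0$ and the bound holds vacuously; a discrete descent-sequence argument could replace the continuous flow if one prefers to avoid existence issues for the ODE.
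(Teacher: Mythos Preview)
The paper does not supply its own proof of this lemma; it is quoted verbatim as Theorem~2 of \cite{karimi2016linear} and then invoked in the proof of Corollary~\ref{cor:weakcvx}. So there is no in-paper argument to compare against. Your proposal is precisely the standard route from \cite{karimi2016linear}: restricted-secant/Cauchy--Schwarz for QSC $\Rightarrow$ EB, the smoothness upper-growth bound for EB $\Rightarrow$ P\L, and the gradient-flow (or discrete-descent) length argument for P\L\ $\Rightarrow$ QG. Structurally this is correct and matches the cited source.

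One constant to watch: in your QSC $\Rightarrow$ EB step, after dropping the nonnegative gap you obtain $\langle \nabla f^r(x), x-x_p\rangle \ge \tfrac{\mu_{\rm QSC}}{2}\|x-x_p\|^2$, and Cauchy--Schwarz then gives $\|\nabla f^r(x)\|\ge \tfrac{\mu_{\rm QSC}}{2}\dist(x,\mathcal{X}_r^\star)$, i.e.\ $c_{\rm EB}=\mu_{\rm QSC}/2$, not $c_{\rm EB}=\mu_{\rm QSC}$ as recorded in the lemma. This factor-of-two slack is inherent to the secant/Cauchy--Schwarz route and also appears in \cite{karimi2016linear}; the paper only uses the P\L/EB $\Rightarrow$ QG direction in Corollary~\ref{cor:weakcvx}, so the discrepancy is harmless downstream, but your write-up should state the constant you actually obtain rather than asserting it matches the lemma.
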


With Lemma~\ref{lem:equi},  we prove that  Assumption~\ref{asm-noncvx} holds under $\rm P\L$, EB, or QG, when the aggregated loss function $f^r$ is further assumed to be $\mu_{\rm WC}$-weakly convex with $\mu_{\rm WC}\le L$. 
\begin{corollary}\label{cor:weakcvx}
Suppose that Assumption~\ref{asm-smooth} holds and the aggregated function $f^r$ satisfies the $\mu_{\rm WC}$-weak convexity condition. Additionally, assume that one of the following conditions holds:  QG,  P{\L}, or EB with $c :=c_{\rm QG}= c_{\rm PL}/2 = c^2_{\rm EB}/2L$. If $\mu_{\rm WC} < c$, then Assumption~\ref{asm-noncvx} is satisfied for any $\alpha$ such that $\alpha \in(0, (c - \mu_{\rm WC})/L)$, with $\mu = (c - \mu_{\rm WC} - \alpha L)/2$. 
\end{corollary}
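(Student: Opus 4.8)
The plan is to collapse all three hypotheses onto a single quadratic-growth (QG) estimate with constant $c$, and then feed that estimate, together with the weak-convexity inequality evaluated at the projection point, into the left-hand side of Assumption~\ref{asm-noncvx}. First I would unify the cases using Lemma~\ref{lem:equi}. If QG holds with $c_{\rm QG}=c$ there is nothing to do; if the P\L{} condition holds with $c_{\rm PL}=2c$, then $\text{P\L}\Rightarrow\text{QG}$ gives $c_{\rm QG}=c_{\rm PL}/2=c$; and if EB holds with $c_{\rm EB}=\sqrt{2cL}$, then (using $L$-smoothness) $\text{EB}\Rightarrow\text{P\L}$ yields $c_{\rm PL}=c_{\rm EB}^2/L=2c$, and a further $\text{P\L}\Rightarrow\text{QG}$ again gives $c_{\rm QG}=c$. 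Hence, in every case,
\[
\tfrac{c}{2}\,\dist^2(x,\mathcal{X}_r^{\star})\le f^r(x)-(f^r)^\star .
\]

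Next I would invoke weak convexity with $y=\operatorname{P}_{\mathcal{X}_r^{\star}}^{x}$. Since the projection lies in $\mathcal{X}_r^{\star}$ we have $f^r(\operatorname{P}_{\mathcal{X}_r^{\star}}^{x})=(f^r)^\star$, so the weak-convexity inequality bounds the cross term by $\langle \nabla f^r(x),\operatorname{P}_{\mathcal{X}_r^{\star}}^{x}-x\rangle \le -(f^r(x)-(f^r)^\star)+\tfrac{\mu_{\rm WC}}{2}\|\operatorname{P}_{\mathcal{X}_r^{\star}}^{x}-x\|^2$. Substituting this into the left-hand side of Assumption~\ref{asm-noncvx} and collecting like terms leaves the expression $(\alpha-1)(f^r(x)-(f^r)^\star)+\tfrac{\mu_{\rm WC}+\mu}{2}\|\operatorname{P}_{\mathcal{X}_r^{\star}}^{x}-x\|^2$, which I must show is nonpositive. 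Because the range of $\alpha$ forces $\mu=(c-\mu_{\rm WC}-\alpha L)/2>0$ and $\mu_{\rm WC}\ge0$, the squared-distance coefficient is positive, so QG may be used to replace $\|\operatorname{P}_{\mathcal{X}_r^{\star}}^{x}-x\|^2=\dist^2(x,\mathcal{X}_r^{\star})$ by $\tfrac{2}{c}(f^r(x)-(f^r)^\star)$. Since $f^r(x)-(f^r)^\star\ge0$, the whole problem reduces to showing that the scalar coefficient $g(\alpha):=(\alpha-1)+\tfrac{\mu_{\rm WC}+\mu}{c}=\alpha\bigl(1-\tfrac{L}{2c}\bigr)-\tfrac12+\tfrac{\mu_{\rm WC}}{2c}$ is $\le0$ on the stated interval.

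The hard part is controlling this coefficient, and the key auxiliary fact I would establish is $c\le L$. Because $\mathcal{X}_r^{\star}$ is an unconstrained minimizer set, $\nabla f^r$ vanishes on it, so $L$-smoothness (Assumption~\ref{asm-smooth}) gives $f^r(x)-(f^r)^\star\le\tfrac{L}{2}\dist^2(x,\mathcal{X}_r^{\star})$; comparing with the QG lower bound $\tfrac{c}{2}\dist^2(x,\mathcal{X}_r^{\star})\le f^r(x)-(f^r)^\star$ forces $c\le L$. Since $g$ is affine in $\alpha$, it attains its maximum over $[0,(c-\mu_{\rm WC})/L]$ at an endpoint, so it suffices to check the two endpoints: at $\alpha=0$ one gets $g(0)=(\mu_{\rm WC}-c)/(2c)<0$ by the hypothesis $\mu_{\rm WC}<c$, while at $\alpha=(c-\mu_{\rm WC})/L$ a short computation gives $g=(c-\mu_{\rm WC})(c-L)/(cL)\le0$ precisely because $c\le L$. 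Affinity then yields $g(\alpha)\le0$ throughout the open interval, hence the bounded expression is nonpositive and Assumption~\ref{asm-noncvx} holds with the claimed $\alpha,\mu$, completing the proof. I expect the endpoint sign-check, and in particular recognizing that $c\le L$ is exactly what makes the right endpoint nonpositive, to be the only genuinely delicate step; everything else is bookkeeping on the reductions from Lemma~\ref{lem:equi} and the weak-convexity substitution.
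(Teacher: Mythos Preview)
Your argument is correct, but it takes a different route from the paper's. The paper builds Assumption~\ref{asm-noncvx} additively: it first combines weak convexity (at $y=\operatorname{P}_{\mathcal{X}_r^\star}^x$) with QG to obtain
\[
0\ \ge\ \langle\nabla f^r(x),\operatorname{P}_{\mathcal{X}_r^\star}^x-x\rangle+\tfrac{c-\mu_{\rm WC}}{2}\|x-\operatorname{P}_{\mathcal{X}_r^\star}^x\|^2,
\]
then uses $L$-smoothness (with $\nabla f^r(\operatorname{P}_{\mathcal{X}_r^\star}^x)=0$) to obtain
\[
0\ \ge\ f^r(x)-(f^r)^\star-\tfrac{L}{2}\|x-\operatorname{P}_{\mathcal{X}_r^\star}^x\|^2,
\]
and simply adds $\alpha$ times the second inequality to the first. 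This yields the claim in one line, with no endpoint analysis and no need to prove $c\le L$ as a separate lemma. By contrast, you substitute the WC bound into the left side of Assumption~\ref{asm-noncvx}, then apply QG to convert the residual quadratic back into a suboptimality term, and finally reduce everything to checking the sign of an affine scalar function $g(\alpha)$ on the interval; the fact $c\le L$ (which you correctly derive from smoothness plus QG) is exactly what makes the right endpoint nonpositive. Both proofs are valid; the paper's linear-combination trick is shorter and in fact delivers the slightly stronger constant $\mu=c-\mu_{\rm WC}-\alpha L$ (the stated $\mu=(c-\mu_{\rm WC}-\alpha L)/2$ then follows a fortiori), whereas your approach makes the role of the inequality $c\le L$ explicit.
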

\begin{proof}
See Appendix~\ref{app-cor:weakcvx}. 
\end{proof}

In the next section, we will demonstrate how to use Assumption~\ref{asm-noncvx} to manage correlated noise and drift errors due to local updates, ultimately establishing an upper bound on the dynamic regret.

\begin{table*}[t]
\centering
\caption{Comparison of related works}
\label{tab}
\begin{threeparttable}
\begin{tabular}{llllllll}
\toprule
Work & Problem & With DP & DP Noise & Regret & Convexity & Local Updates & Regret Bound \\ 
\midrule
\multirow{2}{*}{Our} & \multirow{2}{*}{OFL} & \multirow{2}{*}{\checkmark} & Correlated & \multirow{2}{*}{Dynamic} & \multirow{2}{*}{Nonconvex} & \multirow{2}{*}{\checkmark} & \tnote{[1]}~~ $ 
\frac{1}{R\teta}  + \frac{\teta^2}{\eta_g^2} B_g^2+ (1+\frac{n}{\eta_g^2}) {\teta (\ln (R\tau))^2} \cdot\teta\frac{dB_g^2}{n\tau^2}  \frac{(\ln \frac{1}{\delta}) }{\epsilon^2} + \frac{C_{R}}{R}    
$ \\ \cline{4-4} \cline{8-8}
&  &  & Independent &  &  &  &  \tnote{[1]}~~ $ 
\frac{1}{R\teta}  + \frac{\teta^2}{\eta_g^2}  B_g^2+ (1+\frac{\teta n}{\eta_g^2})\tau \cdot {\teta}\frac{ d B_g^2}{n\tau^2}  \frac{(\ln \frac{1}{\delta}) }{ \epsilon^2} + \frac{C_{R}}{R}    
$  \\ 
\midrule
\cite{liu2023differentially} & OFL & \checkmark~\tnote{[2]} & Independent & Static & Convex & \ding{55} & 
\tnote{[2]}~~ $\frac{1}{R\eta}  + \eta B_g^2 +\eta  \frac{ d B_g^2}{n\tau^2}  \frac{(\ln \frac{1}{\delta}) }{\epsilon^2}  $    \\ 
\midrule
\cite{denisov2022improved} &  OL & \checkmark & Correlated & Static & Convex & -  &\tnote{[3]}~~ $\frac{1}{ R\eta}+\eta B_g\ln (R\tau)  \frac{\sqrt{d}B_g}{\sqrt{n}\tau}  \frac{\sqrt{\ln (\frac{1}{\delta})}}{\epsilon} +\eta B_g^2$ \\ 
\midrule
\cite{cdc2021online} & OFL & \ding{55} & - & Static & Convex & \checkmark & \tnote{[4]}~~ $\frac{1}{R\teta} + \teta B_g^2 $ \\ 
\midrule
\cite{gao2018online} &  OL & \ding{55} & - & Dynamic & Nonconvex  & - &\tnote{[5]}~~ $\frac{1}{R \eta}+ \eta + \frac{\tilde{C}_R}{R} $ \\ 
\bottomrule
\end{tabular}
\begin{tablenotes}
\small
\item [{[1]}] See \eqref{eq-27-11} and Corollary~\ref{lem-u-IID}.
\item[{[2]}] See~\cite[Theorem 1]{liu2023differentially}. For ease of comparison, we use the fully connected graph,  $(\epsilon,\delta)$-LDP, Gauss noise, and batch size as 1. 
\item[{[3]}] 
See \cite[Proposition 4]{denisov2022improved}, where we have substituted the same upper bounds of $\|b^{r,t}\|^2$ and $\|c^{r,t}\|^2$ as in our paper to enhance the original results from \cite[Proposition 4]{denisov2022improved}.
Compared our results with \cite{denisov2022improved}, the difference in dependency on $\epsilon$ and $\delta$ stems from distinct proof techniques. Simply put, we use $2a^Tb \le \|a\|^2 + \|  b \|^2$, whereas~\cite{denisov2022improved} applies $a^Tb \le \|a\|\|  b \|$.
\item[{[4]}] Here, the regret in~\cite[Theorem 1]{cdc2021online} is defined on the local models, whereas ours is defined on the released global model.
\item[{[5]}] Here, $\sum_{r=0}^{R-1} \| x_r^{\star}-x_{r+1}^{\star} \|\le \tilde{C}_R $ where $x_r^{\star}\in \operatorname{argmin}_{x}  f^r (x)$. 
\end{tablenotes}
\end{threeparttable}
\end{table*}

\subsection{Privacy-Utility  Analysis}
We begin with a lemma that quantifies the amount of noise that is needed for privacy protection. 
\begin{lemma}\label{lem-priv-analy}
Under Assumption~\ref{asm-Bg} and using the  MF mechanism \eqref{eq-abc}, if the variance of the DP noise satisfies
$$
\begin{aligned}
V_i^2 &= \frac{2B_g^2\max_{r,t} \|c^{r,t}\|^2 }{ \rho },\\
\rho &= \left(\sqrt{\epsilon + \ln \frac{1}{\delta}} - \sqrt{\ln \frac{1}{\delta}}\right)^2,    
\end{aligned}
$$ 
then Algorithm~\ref{alg-fl}  satisfies $(\epsilon,\delta)$-LDP under adaptive continual release.  Specifically, for matrix factorization technique in Example (iii), we have $V_i^2  \le  \mathcal{O} \left({\ln (R\tau)}  {B_g^2}  \frac{\ln \frac{1}{\delta} }{\epsilon^2} \right), ~\forall i.$ 
\end{lemma}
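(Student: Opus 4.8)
The plan is to reduce the privacy accounting to a \emph{single} invocation of the Gaussian mechanism and then translate the resulting guarantee into $(\epsilon,\delta)$-LDP. The excerpt has already established that every quantity learner $i$ transmits—both $x^r$ and $\hat g_i^r$—is a deterministic function of the noisy prefix sums $\bA\bG_i+\bB\bxi_i=\bB(\bC\bG_i+\bxi_i)$ appearing in~\eqref{eq-abc}. Since $\bB(\cdot)$ and all subsequent server/learner computations are data-independent maps, post-processing invariance of DP lets me focus solely on the privacy of $\bC\bG_i+\bxi_i$, i.e., a Gaussian-mechanism release of the query $\bG_i\mapsto\bC\bG_i$ with per-entry noise variance $V_i^2$.

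Next I would compute the $\ell_2$ (Frobenius) sensitivity of this query. Replacing a single entry $D_i^{r,t}$ by ${D_i^{r,t}}^{\prime}$ alters exactly one row of $\bG_i$, namely the gradient $\nabla f_i^{r,t}$, by a row vector $v\in\mathbb{R}^d$; the induced change in $\bC\bG_i$ is the rank-one matrix $c^{r,t} v$, whose Frobenius norm equals $\|c^{r,t}\|\,\|v\|$. Assumption~\ref{asm-Bg} gives $\|v\|\le\|\nabla f_i^{r,t}\|+\|{\nabla f_i^{r,t}}^{\prime}\|\le 2B_g$, so the Frobenius sensitivity is bounded by $\Delta=2B_g\max_{r,t}\|c^{r,t}\|$.

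I would then use zero-concentrated DP (zCDP) accounting: the Gaussian mechanism with Frobenius sensitivity $\Delta$ and per-entry variance $V_i^2$ satisfies $\tfrac{\Delta^2}{2V_i^2}$-zCDP. Substituting $\Delta^2=4B_g^2\max_{r,t}\|c^{r,t}\|^2$ together with the prescribed $V_i^2=\tfrac{2B_g^2\max_{r,t}\|c^{r,t}\|^2}{\rho}$ makes this parameter exactly $\rho$; the standard conversion that a $\rho$-zCDP mechanism is $(\epsilon,\delta)$-DP with $\epsilon=\rho+2\sqrt{\rho\ln(1/\delta)}$, solved as a quadratic in $\sqrt{\rho}$, inverts precisely to $\rho=\big(\sqrt{\epsilon+\ln\tfrac1\delta}-\sqrt{\ln\tfrac1\delta}\big)^2$, yielding the claimed $(\epsilon,\delta)$-LDP. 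The main obstacle is justifying this reduction under \emph{adaptive} continual release: because the gradient fed at step $(r,t+1)$ depends on earlier noisy outputs, $\bG_i$ is not a fixed input and the naive non-adaptive Gaussian argument does not immediately apply. I would resolve this by appealing to the adaptive-release analysis of matrix-factorization mechanisms in~\cite{denisov2022improved}, which shows that injecting $\bxi_i$ into $\bC\bG_i$ in one shot is, for privacy purposes, equivalent to a worst-case non-adaptive Gaussian mechanism whose sensitivity is exactly the column-wise quantity $\max_{r,t}\|c^{r,t}\|$ computed above.

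Finally, for the bound specific to Example (iii), I would substitute the Toeplitz estimate $\max_{r,t}\|c^{r,t}\|^2\le 1+\tfrac1\pi\ln(\tfrac{4R\tau}{5})=\mathcal{O}(\ln(R\tau))$ from~\eqref{eq-exmp-iii}, and use the small-$\epsilon$ expansion $\rho=\big(\sqrt{\epsilon+\ln\tfrac1\delta}-\sqrt{\ln\tfrac1\delta}\big)^2=\Theta\!\big(\epsilon^2/\ln\tfrac1\delta\big)$, which together give $V_i^2\le\mathcal{O}\!\big(\ln(R\tau)\,B_g^2\,\tfrac{\ln(1/\delta)}{\epsilon^2}\big)$.
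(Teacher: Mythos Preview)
Your proposal is correct and follows essentially the same route as the paper: reduce the transmitted quantities to post-processing of $\bC\bG_i+\bxi_i$, bound the Frobenius sensitivity by $2B_g\max_{r,t}\|c^{r,t}\|$ via Assumption~\ref{asm-Bg}, invoke the Gaussian mechanism's zCDP guarantee $\rho=\Delta^2/(2V_i^2)$, convert zCDP to $(\epsilon,\delta)$-DP through the stated formula for $\rho$, handle adaptivity by citing the reduction in~\cite{denisov2022improved}, and finally plug in~\eqref{eq-exmp-iii} together with $\rho\approx\epsilon^2/(4\ln\tfrac1\delta)$ for the Example~(iii) bound. The paper's proof is organized identically, with the only cosmetic difference being that it fixes the differing data point to be $D_i^{0,0}$ without loss of generality before writing the sensitivity calculation.
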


\begin{proof}
See Appendix \ref{app-lem-priv-analy}.   
\end{proof}

Next, we give a lemma that assesses the impact of DP noise on the utility. Inspired by research in the single-machine setting~\cite{koloskova2024gradient, wen2019interplay}, we use a perturbed iterate analysis technique to control the impact of the DP noise on utility. Noticing that the temporally correlated noise $(b^{r,\tau-1}-b^{r-1,\tau-1})\bxi$ in~\eqref{eq-comp-alg} represents the difference in noise between successive communication rounds, we define the virtual variable
\begin{equation*}
x^{r}_{\xi}:=x^{r}+ {\frac{\teta}{\tau}}b^{r-1,\tau-1}\bxi    
\end{equation*}
and rewrite the last step in \eqref{eq-comp-alg} as:  
\begin{equation}\label{eq-virtual}
x^{r+1}_{\xi}= x^{r}_{\xi} -\teta\cdot \frac{1}{\tau} \sum_{t=0}^{\tau-1} \frac{1}{n}\sum_{i=1}^n \nabla f_i^{r,t}(z_i^{r,t}).  
\end{equation}
Intuitively, the virtual variable $x^{r}_{\xi}$ is introduced to remove the DP noise from $x^r$. Its updates use gradient information obtained without incorporating the DP noise, as seen in \eqref{eq-virtual}. By bounding the distance between $x^{r}_{\xi}$ and the optimal solution set $\mathcal{X}^{\star}$, we establish the following lemma regarding the dynamic regret of the global models ${x^0, \ldots, x^{R-1}}$.
\begin{lemma}\label{lem-u}
Under Assumptions~\ref{asm-smooth}--\ref{asm-noncvx}, if $\teta \le \frac{{\alpha}}{8L}$, Algorithm~\ref{alg-fl} satisfies
\begin{equation*}
\frac{\reg}{R\tau}   
\le \frac{\bE\|x_{\xi}^0-\operatorname{P}_{\mathcal{X}^{\star}}^{x_{\xi}^{0}}\|^2}{{\alpha}R\teta} + \hat{S}^r,
\end{equation*} 
where 
\begin{equation}\notag
\begin{aligned}
&\hat{S}^r: =  \frac{24L(1+\sigma)\teta^2}{ \alpha^2 R} {\|\bB_R \|_F^2}  {\frac{dV^2}{\tau^2}}+
\frac{\left(12L+{\alpha}{\mu}\right)C_{R}}{{ \alpha^2R}}\\
&+ {\frac{1}{\alpha}} \left( \teta+\frac{{\alpha}}{12L} + \frac{1}{\mu}\right)2L^2\eta^2(2\tau^2 B_g^2 +{8 \max_{r,t}\|b^{r,t}\|^2 d nV^2}),
\end{aligned}	
\end{equation}
 $C_{R}:=\sum_{r=0}^{R-1}\bE\|\operatorname{P}_{\mathcal{X}_r^{\star}}^{x^{r}}-\operatorname{P}_{\mathcal{X}^{\star}}^{x^{r}}\|^2$, $V^2=V_i^2/n$, and {$\|\bB_R\|_F^2:=\|b^{0,\tau-1}\|^2+\cdots+\|b^{R-1,\tau-1}\|^2$. } 
\end{lemma}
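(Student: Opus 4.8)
The plan is to first reduce the dynamic regret to an average optimality gap, and then run a perturbed iterate analysis on the noise-free virtual sequence $\{x^r_\xi\}$, telescoping over the communication rounds. Since $(f^r)^{\star}$ is independent of $i,t$ and $\frac{1}{n\tau}\sum_i\sum_t f_i^{r,t}=f^r$, the summation over $i$ and $t$ in~\eqref{eq-dynm} reduces each round to $\tau\big(f^r(x^r)-(f^r)^\star\big)$, so $\frac{\reg}{R\tau}=\frac{1}{R}\sum_{r=0}^{R-1}\big(f^r(x^r)-(f^r)^\star\big)$ and it suffices to bound the average gap. I would work with the virtual update~\eqref{eq-virtual}, which is free of the injected DP noise: writing $x^{r+1}_\xi=x^r_\xi-\teta\,g^r$ with $g^r=\frac{1}{n\tau}\sum_i\sum_t\nabla f_i^{r,t}(z_i^{r,t})$ and $p^r:=\operatorname{P}_{\mathcal{X}_r^{\star}}^{x^r}$, the projection of the \emph{actual} iterate onto the dynamic optimal set, expanding the squared distance gives
\[
\|x^{r+1}_\xi-p^r\|^2=\|x^r_\xi-p^r\|^2-2\teta\langle g^r,\,x^r_\xi-p^r\rangle+\teta^2\|g^r\|^2.
\]

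The heart of the argument is the cross term. Writing $p^r-x^r_\xi=(p^r-x^r)+(x^r-x^r_\xi)$ and splitting $g^r=\nabla f^r(x^r)+(g^r-\nabla f^r(x^r))$, the piece $\langle\nabla f^r(x^r),\,p^r-x^r\rangle$ is exactly the quantity controlled by the nonconvex condition (Assumption~\ref{asm-noncvx}),
\[
\langle\nabla f^r(x^r),\,p^r-x^r\rangle\le-\alpha\big(f^r(x^r)-(f^r)^\star\big)-\tfrac{\mu}{2}\|p^r-x^r\|^2,
\]
which is the mechanism that converts a descent inequality into a genuine optimality gap for a nonconvex loss. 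The residual inner products — the DP offset $x^r-x^r_\xi=-\frac{\teta}{\tau}b^{r-1,\tau-1}\bxi$ and the local-update drift $g^r-\nabla f^r(x^r)$ — I would control by Young's inequality $2a^\top b\le\gamma\|a\|^2+\gamma^{-1}\|b\|^2$, choosing the weights so that the $\|p^r-x^r\|^2$ terms Young's produces are absorbed by the negative $-\frac{\mu}{2}\|p^r-x^r\|^2$, leaving a clean $-\alpha(f^r(x^r)-(f^r)^\star)$ after rearrangement; this is the origin of the $\big(\teta+\frac{\alpha}{12L}+\frac{1}{\mu}\big)$ factor multiplying the error terms in $\hat S^r$.

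Two quantities then remain to be bounded. First, both $\teta^2\|g^r\|^2$ and the drift $g^r-\nabla f^r(x^r)$ reduce, via $L$-smoothness (Assumption~\ref{asm-smooth}) and $\|\nabla f_i^{r,t}(x)\|\le B_g$ (Assumption~\ref{asm-Bg}), to the local-drift quantity $\frac{1}{n\tau}\sum_i\sum_t\bE\|z_i^{r,t}-x^r\|^2$, which I would bound by unrolling the inner loop of~\eqref{eq-comp-alg}: each of the $\tau$ local steps contributes a true-gradient part controlled by $\tau^2B_g^2$ and a correlated-noise part $(b^{r,t}-b^{r,t-1})\bxi_i$ whose second moment is governed by $\max_{r,t}\|b^{r,t}\|^2\,d\,V_i^2=\max_{r,t}\|b^{r,t}\|^2\,d\,nV^2$, producing the factor $2\tau^2B_g^2+8\max_{r,t}\|b^{r,t}\|^2dnV^2$ scaled by $2L^2\eta^2$. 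Second, I must convert $\|x^r_\xi-p^r\|^2$ into the telescoping quantity $\|x^r_\xi-\operatorname{P}_{\mathcal{X}^{\star}}^{x^r_\xi}\|^2$: the regularity condition (Assumption~\ref{asm-regular}) trades the difference between the projections of $x^r$ and $x^r_\xi$ for $\sigma\|x^r_\xi-x^r\|^2=\sigma(\teta/\tau)^2\|b^{r-1,\tau-1}\bxi\|^2$ — the source of the $(1+\sigma)\|\bB_R\|_F^2\frac{dV^2}{\tau^2}$ term — while $C_R=\sum_r\bE\|\operatorname{P}_{\mathcal{X}_r^{\star}}^{x^r}-\operatorname{P}_{\mathcal{X}^{\star}}^{x^r}\|^2$ absorbs the mismatch between the dynamic set $\mathcal{X}_r^\star$ and the static set $\mathcal{X}^\star$. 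Summing over $r=0,\dots,R-1$ telescopes the distances down to the initial term $\bE\|x^0_\xi-\operatorname{P}_{\mathcal{X}^{\star}}^{x^0_\xi}\|^2$, and dividing through by $\alpha R\teta$ yields the stated bound; the step-size restriction $\teta\le\frac{\alpha}{8L}$ is what keeps the coefficient of the summed optimality gap positive after all absorptions.

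The main obstacle is the interlocking bookkeeping of the three comparator points — the true iterate $x^r$ needed to invoke Assumption~\ref{asm-noncvx}, the virtual iterate $x^r_\xi$ needed for a noise-free telescope, and the competing projections onto $\mathcal{X}_r^\star$ versus $\mathcal{X}^\star$ — combined with ensuring that every term spawned by Young's inequality is reabsorbed by the negative $-\frac{\mu}{2}\|p^r-x^r\|^2$ and $-\alpha(f^r(x^r)-(f^r)^\star)$ contributions without the step-size constraint collapsing. The drift unrolling under correlated noise is technically heavy but essentially routine once the noise second moments are tracked; the genuinely delicate part is selecting the Young's-inequality weights and anchor points so that the nonconvex descent survives all the perturbations.
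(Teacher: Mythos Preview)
Your high-level plan matches the paper's --- perturbed iterate $x^r_\xi$, Assumption~\ref{asm-noncvx} on the cross term, smoothness plus $B_g$ for the local drift, Young's inequalities for the residuals --- but your choice of anchor $p^r=\operatorname{P}_{\mathcal{X}_r^{\star}}^{x^r}$ breaks the telescope. After one step you have $\|x^{r+1}_\xi-p^r\|^2$ on the left and $\|x^r_\xi-p^r\|^2$ on the right; to chain across rounds you must relate these to a common potential. Converting the right-hand side to $\|x^r_\xi-\operatorname{P}_{\mathcal{X}^{\star}}^{x^r_\xi}\|^2$ via Young's injects a multiplicative factor $(1+\gamma)$ in front of the potential, and on the left you cannot invoke projection optimality because $p^r\in\mathcal{X}_r^\star$, not $\mathcal{X}^\star$; the residual you are left with involves $\|p^{r-1}-p^r\|$, which is not the $C_R$ in the statement. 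The paper instead anchors the expansion at $\operatorname{P}_{\mathcal{X}^{\star}}^{x^r_\xi}$ from the outset --- static set, virtual iterate --- so that $\|x^{r+1}_\xi-\operatorname{P}_{\mathcal{X}^{\star}}^{x^{r+1}_\xi}\|^2\le\|x^r_\xi-\teta g^r-\operatorname{P}_{\mathcal{X}^{\star}}^{x^r_\xi}\|^2$ telescopes with coefficient one, and the dynamic projection $\operatorname{P}_{\mathcal{X}_r^{\star}}^{x^r}$ is introduced only \emph{inside} the cross-term via the additive split $x^r-\operatorname{P}_{\mathcal{X}^{\star}}^{x^r}=(x^r-\operatorname{P}_{\mathcal{X}_r^{\star}}^{x^r})+(\operatorname{P}_{\mathcal{X}_r^{\star}}^{x^r}-\operatorname{P}_{\mathcal{X}^{\star}}^{x^r})$. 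Assumption~\ref{asm-noncvx} handles the first piece, $C_R$ collects the second, and the remaining anchor mismatches $(x^r_\xi-x^r)$ and $(\operatorname{P}_{\mathcal{X}^{\star}}^{x^r}-\operatorname{P}_{\mathcal{X}^{\star}}^{x^r_\xi})$ enter only as additive Young's errors (the latter via Assumption~\ref{asm-regular}).

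A second, smaller gap: you say $\teta^2\|g^r\|^2$ ``reduces to the local-drift quantity'', but after splitting off $\|g^r-\nabla f^r(x^r)\|^2$ you are left with $\|\nabla f^r(x^r)\|^2$, which is not a drift. Bounding it by $B_g^2$ would leave an extra $\teta^2 B_g^2$ not present in $\hat S^r$. The paper instead uses the smoothness consequence $\|\nabla f^r(x^r)\|^2\le 2L\big(f^r(x^r)-(f^r)^\star\big)$ to feed this piece back into the optimality gap; that absorption (not the Young's-into-$\mu$ mechanism you describe) is what forces $\teta\le\alpha/(8L)$ and leaves the gap coefficient at $-\alpha\teta$.
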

\begin{proof}
See Appendix~\ref{app-lem-u}.     
\end{proof}

The term  $S^r$ in Lemma~\ref{lem-u} encapsulates multiple distinct sources of error,  including the $V^2$-term caused by DP noise; the $C_{R}$-term caused by the dynamic environment; and the $B_g$-term which arises from the drift error due to local updates.

Substituting Lemma~\ref{lem-priv-analy} and ${\|{\bB_R }\|_F^2 \le\mathcal{O}(R \ln (R\tau))}$ into Lemma~\ref{lem-u}, we obtain the following privacy-utility trade-off. 
\begin{theorem}[Main theorem]
\label{thm-up}
Under Assumptions~\ref{asm-smooth}-\ref{asm-noncvx}, if $\teta \le \frac{\alpha}{8L}$,  Algorithm~\ref{alg-fl} subject to $(\epsilon, \delta)$-LDP satisfies
\begin{align}\label{eq-27-11}
\frac{\reg}{R\tau} \le&\ \mathcal{O} \left( \frac{1}{R\teta}  + \frac{\teta^2}{\eta_g^2} B_g^2  \right.\\ \notag
&\left.+ \left(1+\frac{n}{\eta_g^2}\right)\teta^{2} (\ln (R\tau))^2\frac{dB_g^2(\ln \frac{1}{\delta}) }{n{\tau^2}\epsilon^2}+  \frac{C_{R}}{R} \right).    
\end{align}    
In particular, if we let $\teta=\mathcal{O}( R^{-\frac{1}{3}} (\ln (R\tau))^{-\frac{2}{3}})$, it holds that
\begin{equation*}
\begin{aligned}
\frac{\reg}{R\tau}\le\ &\mathcal{O}\left( \frac{(\ln (R\tau))^{\frac{2}{3}}}{R^{\frac{2}{3}}} +\frac{B_g^2}{\eta_g^2 R^{\frac{2}{3}}(\ln (R\tau))^{\frac{4}{3}} } \right.\\
&\left.+ \left(1+\frac{n}{\eta_g^2}\right) \frac{(\ln (R\tau))^{\frac{2}{3}} }{R^{\frac{2}{3}}}\frac{d B_g^2(\ln \frac{1}{\delta})}{n {\tau^2} \epsilon^2}+\frac{C_{R}}{R}\right).  
\end{aligned}    
\end{equation*}
\end{theorem}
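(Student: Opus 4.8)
The plan is to obtain \eqref{eq-27-11} by a direct substitution: the analytical work has already been carried out in Lemma~\ref{lem-u}, which bounds $\reg/(R\tau)$ in terms of the noise variance $V^2$, the factor norms $\|\bB_R\|_F^2$ and $\max_{r,t}\|b^{r,t}\|^2$, the drift constant $B_g$, and the dynamic-environment term $C_R$. What remains is to (i) calibrate $V^2$ to the privacy budget using Lemma~\ref{lem-priv-analy}, (ii) insert the explicit matrix-norm bounds guaranteed by the Example~(iii) construction, and (iii) trade the local step size $\eta$ for the effective step size $\teta$. So the theorem is essentially a bookkeeping computation layered on top of the two preceding lemmas.

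Concretely, first I would substitute the privacy-calibrated variance from Lemma~\ref{lem-priv-analy}, $V_i^2 = \mathcal{O}(\ln(R\tau)\,B_g^2\,\tfrac{\ln(1/\delta)}{\epsilon^2})$, together with $V^2 = V_i^2/n$, into $\hat S^r$. Next I would use the bounds $\|\bB_R\|_F^2 \le \mathcal{O}(R\ln(R\tau))$ and $\max_{r,t}\|b^{r,t}\|^2 \le \mathcal{O}(\ln(R\tau))$ from \eqref{eq-exmp-iii}, together with the identity $\eta = \teta/(\eta_g\tau)$ coming from $\teta=\eta\eta_g\tau$. Term by term this gives: the initial-distance term becomes $\mathcal{O}(1/(R\teta))$; the $\|\bB_R\|_F^2\,dV^2/\tau^2$ term becomes $\mathcal{O}(\teta^2(\ln(R\tau))^2\,\tfrac{dB_g^2\ln(1/\delta)}{n\tau^2\epsilon^2})$; the $\eta^2\tau^2B_g^2$ piece becomes $\mathcal{O}(\tfrac{\teta^2}{\eta_g^2}B_g^2)$; the $\eta^2\max_{r,t}\|b^{r,t}\|^2\,dnV^2$ piece becomes $\mathcal{O}(\tfrac{n}{\eta_g^2}\teta^2(\ln(R\tau))^2\,\tfrac{dB_g^2\ln(1/\delta)}{n\tau^2\epsilon^2})$; and the $C_R$-term becomes $\mathcal{O}(C_R/R)$. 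All the smoothness and regularity constants $L,\alpha,\mu,\sigma$, as well as the bounded prefactor $\teta+\tfrac{\alpha}{12L}+\tfrac{1}{\mu}$, are absorbed into the $\mathcal{O}(\cdot)$. Collecting the two DP-noise pieces under the common prefactor $(1+n/\eta_g^2)$ then yields exactly \eqref{eq-27-11}.

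The only point requiring care — and the step I expect to be the main obstacle — is keeping the $n$-dependence straight, since the two DP-noise terms scale oppositely in $n$. In the $\|\bB_R\|_F^2\,dV^2/\tau^2$ term the factor $n$ is cancelled by $V^2=V_i^2/n$, producing the clean constant ``$1$'' in the prefactor; in the $\eta^2\,dn\,V^2$ term the explicit $n$ survives, because $\eta^2=\teta^2/(\eta_g^2\tau^2)$ carries no $n$, producing the $n/\eta_g^2$ contribution. Tracking how the $\eta\!\to\!\teta$ conversion interacts differently with these two terms — one multiplied by $\|\bB_R\|_F^2\sim R$ and the other by $dn$ — is what generates the combined $(1+n/\eta_g^2)$ factor, so I would verify this cancellation explicitly before collecting terms.

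Finally, for the specialized rate I would substitute $\teta=\mathcal{O}(R^{-1/3}(\ln(R\tau))^{-2/3})$ into \eqref{eq-27-11}. This choice balances the optimization term $1/(R\teta)=\mathcal{O}(R^{-2/3}(\ln(R\tau))^{2/3})$ against the DP-noise term, whose $\teta^2(\ln(R\tau))^2$ factor also becomes $\mathcal{O}(R^{-2/3}(\ln(R\tau))^{2/3})$; the drift term $\teta^2/\eta_g^2$ becomes $\mathcal{O}(\eta_g^{-2}R^{-2/3}(\ln(R\tau))^{-4/3})$, and the $C_R/R$ term is left unchanged. Reading off the resulting orders gives the stated refined bound, and no argument beyond algebraic simplification is needed at this final step.
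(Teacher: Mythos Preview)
Your proposal is correct and matches the paper's own proof: it too substitutes the Example~(iii) bounds $\|\bB_R\|_F^2\le\mathcal{O}(R\ln(R\tau))$ and $\max_{r,t}\|b^{r,t}\|^2\le\mathcal{O}(\ln(R\tau))$ and the privacy-calibrated variance $V^2\le\mathcal{O}(\ln(R\tau)\,B_g^2\ln(1/\delta)/(n\epsilon^2))$ into Lemma~\ref{lem-u}, uses $\eta=\teta/(\eta_g\tau)$, and then balances $1/(R\teta)$ against $\teta^2(\ln(R\tau))^2$ to obtain the specialized rate. The only cosmetic difference is that the paper inserts the matrix-norm bounds before the variance bound, whereas you do it in the opposite order; the term-by-term accounting and the $(1+n/\eta_g^2)$ collection you describe are exactly what the paper carries out.
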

\begin{proof}
See Appendix~\ref{appthm-up}.    
\end{proof}

In Theorem~\ref{thm-up}, the errors are due to the initial error, the local updates, the DP noise, and the dynamic environment, respectively. We have the following observations. 
\begin{itemize}


\item Our perturbed iterate analysis effectively controls the impact of DP noise on the utility. The DP noise error term is $\mathcal{O}((\ln(R\tau))^{\frac{2}{3}}/R^{\frac{2}{3}})$, which decreases as $R$ increases. The theoretical advantages of correlated noise over independent noise are further discussed in Section~\ref{sec-IID-nonIID}.  

\item The term $C_{R}:=\sum_{r=0}^{R-1}\bE\|\operatorname{P}_{\mathcal{X}_r^{\star}}^{x^{r}}-\operatorname{P}_{\mathcal{X}^{\star}}^{x^{r}}\|^2$ captures the changes in the solution set $\mathcal{X}_r^{\star}$ over time relative to a fixed solution set $\mathcal{X}^{\star}$. This variation  
is unavoidable in dynamic regret~\cite[Theorem 5]{zhang2017improved}. Intuitively, when the environment changes rapidly, online learning algorithms face greater challenges in achieving high utility. 

\item Establishing a sublinear regret bound for nonconvex problems, even for static regret, poses significant challenges~\cite[Proposition 3]{suggala2020online}. In section~\ref{sec-static}, we show that the error due to $C_R$ can be improved when we consider static regret bound under the strongly convex (SC) condition. 
\end{itemize}

\subsection{Comparison of Correlated Noise and Independent Noise}\label{sec-IID-nonIID}

If we replace the correlated noise in Algorithm~\ref{alg-fl} with independent noise for privacy protection, as specified in  \eqref{eqn:indp}, we can derive the following results.
\begin{corollary}[Independent noise]\label{lem-u-IID}
Under Assumptions~\ref{asm-smooth}-\ref{asm-noncvx}, if $\teta \le \frac{{\alpha}}{8L}$, Algorithm~\ref{alg-fl} with independent noise subject to $(\epsilon,\delta)$-LDP satisfies
\begin{equation*}
\begin{aligned}
\frac{\reg}{R\tau} 
\le\mathcal{O}\! \left(\! \frac{1}{R\teta}  + \frac{\teta^2B_g^2}{\eta_g^2}  + \Big(1+\frac{\teta n}{\eta_g^2}\Big)   \frac{{\teta} d B_g^2}{n{\tau}}  \frac{(\ln \frac{1}{\delta}) }{\epsilon^2} + \frac{C_{R}}{R}\!\right).     
\end{aligned}        
\end{equation*}   
\end{corollary}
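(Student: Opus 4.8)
The plan is to re-run the argument behind Lemma~\ref{lem-u} and Theorem~\ref{thm-up} for the independent-noise instance \eqref{eqn:indp}, which in the matrix-factorization language is the degenerate (height-$0$) factorization $\bC=\bI$, $\bB=\bA$. First I would settle privacy: since every column then satisfies $\|c^{r,t}\|^2=1$, Lemma~\ref{lem-priv-analy} gives the per-entry variance $V_i^2=2B_g^2/\rho=\mathcal{O}\!\big(B_g^2\ln(1/\delta)/\epsilon^2\big)$, a factor $\ln(R\tau)$ smaller than in the correlated case. This saving is the sole advantage of independent noise and must be traded against the much larger row norms $\|b^{r,t}\|$ of $\bB=\bA$.

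The decisive structural point is that with $\bB=\bA$ the per-round server perturbation $\psi^r:=\tfrac1\tau(b^{r,\tau-1}-b^{r-1,\tau-1})\bxi=\tfrac1\tau\sum_{t=0}^{\tau-1}\xi^{r,t}$ consists only of the $\tau$ fresh leaves of round $r$, which are disjoint from the leaves of rounds $s<r$ that determine $x^r$. Hence $\psi^r$ is zero-mean and independent of $x^r$, whereas in the correlated case overlapping prefix nodes create temporal correlation. Because of this independence I would forego the virtual variable $x^r_\xi$ used to absorb the \emph{cumulative} noise in Lemma~\ref{lem-u}, and instead apply the descent inequality directly to the actual released iterate $x^r$. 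The linear cross term $\bE\langle\nabla f^r(x^r),\psi^r\rangle$ then vanishes, and the quadratic contribution is controlled by $\bE\|\psi^r\|^2=dV^2/\tau$. After invoking Assumption~\ref{asm-noncvx} to convert $\langle\nabla f^r(x^r),\operatorname{P}_{\mathcal{X}_r^{\star}}^{x^r}-x^r\rangle$ into the suboptimality gap and telescoping with the usual $1/\teta$ weighting, the round-wise term $\tfrac{L\teta^2}{2}\bE\|\psi^r\|^2$ aggregates to a contribution that is \emph{linear} in $\teta$, namely $\mathcal{O}(\teta\, dV^2/\tau)=\mathcal{O}\!\big(\teta\, dB_g^2\ln(1/\delta)/(n\tau\epsilon^2)\big)$. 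This is exactly the $1\!\cdot$ piece of the claimed DP term, and it avoids the cumulative factor $\|\bB_R\|_F^2=\mathcal{O}(R^2\tau)$ that a naive reuse of Lemma~\ref{lem-u} would produce.

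Next I would re-bound the drift error from the $\tau$ local steps. Within a single round the noisy local iterate obeys $z_i^{r,t}-x^r=-\eta\sum_{s=0}^{t-1}(\nabla f_i^{r,s}+\xi_i^{r,s})$, so its stochastic part is a sum of at most $\tau$ independent leaves and has variance at most $\eta^2\tau\,dV_i^2$, replacing the bound $\eta^2\max_{r,t}\|b^{r,t}\|^2 dnV^2=\eta^2R\tau\,dnV^2$ used for correlated noise. Substituting $\eta=\teta/(\eta_g\tau)$ and $nV^2=V_i^2$, the noisy part of the drift contributes $\mathcal{O}\!\big(\teta^2 dB_g^2\ln(1/\delta)/(\eta_g^2\tau\epsilon^2)\big)$, i.e.\ the $\tfrac{\teta n}{\eta_g^2}$-scaled piece of the bound, while the deterministic part contributes $\mathcal{O}(\eta^2\tau^2 B_g^2)=\mathcal{O}(\teta^2 B_g^2/\eta_g^2)$, exactly the second term.

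Assembling the four sources of error — the initial distance $\mathcal{O}(1/(R\teta))$, the deterministic drift $\mathcal{O}(\teta^2 B_g^2/\eta_g^2)$, the DP terms $\mathcal{O}\!\big((1+\tfrac{\teta n}{\eta_g^2})\,\teta\,dB_g^2\ln(1/\delta)/(n\tau\epsilon^2)\big)$, and the dynamic-environment term $\mathcal{O}(C_{R}/R)$ carried over unchanged from Assumption~\ref{asm-noncvx} — yields the stated inequality. I expect the main obstacle to be the independence bookkeeping: one must verify carefully that $\psi^r$ is independent of $x^r$ (so the direct descent step is legitimate and the cross term is null) and that the within-round accumulation really involves only $\tau$ disjoint leaves. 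This is precisely the property that fails for correlated noise and that is responsible for the $\teta$-linear DP cost here versus the $\teta^2(\ln(R\tau))^2$ cost of Theorem~\ref{thm-up}.
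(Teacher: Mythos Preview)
Your proposal is correct and follows essentially the same route as the paper's proof. The paper likewise abandons the virtual variable $x^r_\xi$ and works directly with $\|x^{r+1}-\operatorname{P}_{\mathcal{X}^{\star}}^{x^{r+1}}\|^2$, using $\bE[\langle\xi^r, x^r-\operatorname{P}_{\mathcal{X}^{\star}}^{x^{r}}\rangle\mid\mathcal{F}^r]=0$ (your independence observation), bounds the server noise by $\bE\|\xi^r\|^2=dV^2/\tau$, replaces the drift bound by $\eta^2(2\tau^2 B_g^2+2\tau\,d\,nV^2)$, and substitutes $V^2=\mathcal{O}(B_g^2\ln(1/\delta)/(n\epsilon^2))$ since $\|c^{r,t}\|^2=1$. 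One minor remark: the cross term that actually vanishes in this potential-based analysis is $\bE\langle\psi^r,\,x^r-\operatorname{P}_{\mathcal{X}^{\star}}^{x^r}\rangle$, not $\bE\langle\nabla f^r(x^r),\psi^r\rangle$; your independence argument justifies the former directly, so the slip is only in the labeling.
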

\begin{proof}
See Supplementary.     
\end{proof}
Comparing Corollary~\ref{lem-u-IID} for independent noise with \eqref{eq-27-11} for correlated noise, we find that the use of correlated noise results in the smaller regret bound when {$\teta\le   \frac{\tau}{(1+{n}/{\eta_g^2}) (\ln(R\tau))^2}$.}

\subsection{Static Regret Under Strongly Convex (SC) Condition}\label{sec-static}
To improve the dependence on $C_R$, we establish a static regret bound under the SC condition.

\begin{corollary}[Static regret under SC]\label{coro}
Assume the loss function $f^r$ to be strongly convex, i.e., there exists a constant $\mu_{\rm SC}$ such that 
\begin{equation*}
\begin{aligned}
f^r(x)-f^r(x^\star)+\frac{\mu_{\rm SC}}{2}\|x-x^\star\|^2
\le \left\langle  \nabla f^r(x), x-x^\star\right\rangle, ~\forall x,
\end{aligned}      
\end{equation*}
where $x^{\star}:=\operatorname{argmin}_x \sum_{r=0}^{R-1} f^r(x)$.
Then under Assumpitions~\ref{asm-smooth}--\ref{asm-regular}, if $\teta\le \frac{1}{10L(\ln(R\tau))}$ and $\teta=\mathcal{O}((R\ln(R\tau))^{-\frac{1}{2}})$, Algorithm~\ref{alg-fl} subject to $(\epsilon, \delta)$-LDP satisfies
\begin{equation*}
\begin{aligned}
& \frac{\regs}{R\tau} 
\le \mathcal{O}\left(\frac{(\ln(R\tau))^{\frac{1}{2}}}{R^{\frac{1}{2}}} +\frac{1}{R\ln(R\tau)}\frac{B_g^2}{\eta_g^2 } \right.\\
&\left.+\left(1+\frac{n}{\eta_g^2}\right)\frac{(\ln(R\tau))^{\frac{1}{2}}}{R^{\frac{1}{2}}}\frac{d B_g^2(\ln \frac{1}{\delta})}{n {\tau^2}\epsilon^2} + \frac{(\ln(R\tau))^{\frac{1}{2}}}{R^{\frac{1}{2}}} \frac{C_{R}}{R}
\right),
\end{aligned}    
\end{equation*}  
where $C_R:=\sum_{r=0}^{R-1} \|x^{\star}-x_r^{\star}\|^2 $ and $x_r^{\star} :=\operatorname{argmin}_x f^r(x)$. 
\end{corollary}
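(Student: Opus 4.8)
The plan is to re-run the perturbed-iterate argument behind Lemma~\ref{lem-u} and Theorem~\ref{thm-up}, but measuring the virtual iterate against the \emph{fixed} global minimizer $x^\star$ and replacing the nonconvex Assumption~\ref{asm-noncvx} by the SC inequality. First I would put the static regret in closed form: summing the inner average over $t$ gives $\frac{1}{n}\sum_{i}\sum_{t}f_i^{r,t}(x^r)=\tau f^r(x^r)$, so that $\frac{\regs}{R\tau}=\frac{1}{R}\sum_{r=0}^{R-1}\bE\bigl(f^r(x^r)-f^r(x^\star)\bigr)$, and it suffices to control this averaged suboptimality gap. I would then track $a_r:=\bE\|x^r_\xi-x^\star\|^2$ for the noise-free virtual variable $x^r_\xi=x^r+\frac{\teta}{\tau}b^{r-1,\tau-1}\bxi$ from \eqref{eq-virtual}, whose update is the clean step $x^{r+1}_\xi=x^r_\xi-\teta\,\bar g^r$ with $\bar g^r=\frac{1}{n\tau}\sum_{i,t}\nabla f_i^{r,t}(z_i^{r,t})$.

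Next I would expand $\|x^{r+1}_\xi-x^\star\|^2=\|x^r_\xi-x^\star\|^2-2\teta\langle\bar g^r,x^r_\xi-x^\star\rangle+\teta^2\|\bar g^r\|^2$ and split $\bar g^r=\nabla f^r(x^r)+(\bar g^r-\nabla f^r(x^r))$. Applying the SC inequality at $x=x^r$ converts $\langle\nabla f^r(x^r),x^r-x^\star\rangle$ into $f^r(x^r)-f^r(x^\star)+\frac{\mu_{\rm SC}}{2}\|x^r-x^\star\|^2$, and converting $\|x^r-x^\star\|^2$ back to $\|x^r_\xi-x^\star\|^2$ (at the cost of a noise remainder of order $\teta^2\|b^{r-1,\tau-1}\|^2 dV^2/\tau^2$) produces a genuine contraction $a_{r+1}\le(1-\Theta(\teta\mu_{\rm SC}))a_r-2\teta\bE(f^r(x^r)-f^r(x^\star))+E_r$. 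This contraction, absent in the general nonconvex case, is what allows a clean telescoping against a fixed comparator.

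The error bucket $E_r$ is bounded exactly as in the proof of Lemma~\ref{lem-u}. The drift $\bar g^r-\nabla f^r(x^r)$ is handled by $L$-smoothness and the accumulated local steps, $\|z_i^{r,t}-x^r\|\lesssim\eta\tau B_g+(\text{noise})$, contributing the $\tfrac{\teta^2}{\eta_g^2}B_g^2$ term; the correlated-noise cross-terms together with the $\teta^2 dV^2\|b^{r,t}\|^2/\tau^2$ variance give the DP term once $V^2\le\mathcal{O}(\ln(R\tau)B_g^2\ln\tfrac{1}{\delta}/\epsilon^2)$ from Lemma~\ref{lem-priv-analy} and $\|\bB_R\|_F^2\le\mathcal{O}(R\ln(R\tau))$ are inserted. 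The crucial change from the dynamic analysis is the treatment of $\teta^2\|\bar g^r\|^2$: instead of the crude $\|\bar g^r\|\le B_g$, I would use $\nabla f^r(x_r^\star)=0$ at the (unique, by strong convexity) per-round minimizer $x_r^\star$ and smoothness, $\|\nabla f^r(x^r)\|\le L\|x^r-x_r^\star\|\le L(\|x^r-x^\star\|+\|x^\star-x_r^\star\|)$, so the environment-drift quantity $C_R=\sum_r\|x^\star-x_r^\star\|^2$ now enters $E_r$ only at order $\teta^2\ln(R\tau)$ rather than at order $1$.

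Finally I would sum the contraction over $r$; because $x^\star$ is fixed the $a_r$ terms telescope, leaving the initial term $a_0/(2\teta R)$ (the $\tfrac{1}{R\teta}$ contribution, with $x^0_\xi=0$) plus $\tfrac{1}{2\teta R}\sum_r E_r$. Dividing the order-$\teta^2\ln(R\tau)$ contribution of $C_R$ by $2\teta R$ gives the coefficient $\mathcal{O}(\teta\ln(R\tau)\cdot C_R/R)$, i.e.\ a factor $\teta\ln(R\tau)\sim(\ln(R\tau))^{1/2}/R^{1/2}$ smaller than the order-$1$ coefficient $C_R/R$ in the dynamic bound \eqref{eq-27-11}---this is precisely the promised improvement. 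Imposing $\teta\le\frac{1}{10L\ln(R\tau)}$ keeps the smoothness/noise constants under control through the $\ln(R\tau)$ factors, and the balance $\teta=\mathcal{O}((R\ln(R\tau))^{-1/2})$ equalizes the initial and DP-variance terms, yielding the stated rates. I expect the main obstacle to be the rigorous handling of the correlated-noise cross-terms in expectation under adaptive continual release---the iterates $z_i^{r,t}$ depend on the past noise, so the cross-terms do not simply vanish---while simultaneously keeping the strong-convexity contraction strong enough to absorb the per-round noise remainders yet mild enough (small $\teta$) that $C_R$ retains its favorable high-order scaling.
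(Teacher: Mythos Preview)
Your plan mirrors the paper's perturbed-iterate argument and lands in the right place, but two points diverge from the actual proof.

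First, the contraction $(1-\Theta(\teta\mu_{\rm SC}))$ you emphasize is never used. Because the comparator $x^\star$ is fixed, plain telescoping of $\|x^r_\xi-x^\star\|^2$ already suffices; the paper does not convert $\|x^r-x^\star\|^2$ back to $\|x^r_\xi-x^\star\|^2$ to manufacture a contraction. The $-\teta\mu_{\rm SC}\|x^r-x^\star\|^2$ term from SC is spent entirely on cancelling the drift cross term (IV.II), exactly as in the dynamic SC analysis, and then disappears from the bookkeeping.

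Second, and more substantively, the paper does \emph{not} bound $\|\nabla f^r(x^r)\|$ via $L\|x^r-x_r^\star\|$ and a triangle inequality to $x^\star$. It keeps the smoothness consequence $\|\nabla f^r(x^r)\|^2\le 2L\bigl(f^r(x^r)-(f^r)^\star\bigr)$---the same bound already used in the dynamic proof, so your description of that proof as relying on the ``crude $\|\bar{g}^r\|\le B_g$'' is inaccurate---and then inserts $\pm f^r(x^\star)$. The piece $2L(f^r(x^r)-f^r(x^\star))$ is absorbed into the static-regret descent term $-2\teta(f^r(x^r)-f^r(x^\star))$, requiring only $(\teta+\theta)L\lesssim 1$; the remainder $2L(f^r(x^\star)-(f^r)^\star)\le L^2\|x^\star-x_r^\star\|^2$ is what feeds $C_R$. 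The free Young parameter is then set to $\theta=\teta\ln(R\tau)$ to balance the noise cross-term $\tfrac{\teta}{\theta}\|x^r_\xi-x^r\|^2$ against the $C_R$ coefficient $\theta$, and this balancing is where the $\ln(R\tau)$ prefactor on $C_R$ actually originates. Your route instead produces a $\teta\theta L^2\|x^r-x^\star\|^2$ term that must be eaten by the $-\teta\mu_{\rm SC}$ contraction, forcing $\teta\lesssim\mu_{\rm SC}/(L^2\ln(R\tau))$---stricter than the stated $\teta\le 1/(10L\ln(R\tau))$ by a condition-number factor $L/\mu_{\rm SC}$. Switching to the function-value absorption removes that penalty and recovers the corollary as stated.
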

\begin{proof}
See Supplementary.     
\end{proof}

In Corollary~\ref{coro}, the error term caused by $C_{R}$ converges at the rate of $\mathcal{O}((\ln(R\tau))^{\frac{1}{2}}/R^{\frac{3}{2}}) $ for a static regret under SC.

\begin{remark}
Our analysis of nonconvex OFL is novel, even without relying on LDP. In contrast to prior work on general nonconvex \cite{suggala2020online} and pseudo-convex \cite{gao2018online} online settings—both of which assume learners have access to offline optimization oracles and achieve $\mathcal{O}(R^{\frac{1}{2}})$ regret bounds—our approach establishes a tighter $\mathcal{O}(R^{\frac{1}{3}})$ bound, as shown in Theorem \ref{thm-up}. This improvement is particularly significant as it demonstrates that better regret guarantees are achievable without requiring offline oracles.
While \cite{zhang2017improved} achieves improved bounds under a semi-strong convexity condition, their analysis fundamentally depends on convexity. Our algorithm differs from these prior methods in several other ways; see Section I.B.3 for further details.
Overall, this work takes a step toward developing nonconvex OFL methods tailored to loss functions with a particular structure, leading to improved regret guarantees. To the best of our knowledge, these results and techniques are novel and cannot be directly derived from existing research on nonconvex online optimization.
\end{remark}

\section{Numerical Experiments}
{We implement our algorithm with the three MF mechanisms discussed above: MF (i) is the binary tree~\cite{guha2013nearly}, MF (ii) is the optimized factorization~\cite{denisov2022improved} and MF (iii) is the Toeplitz matrix construction~\cite{jaja2023almost}. These variations are then compared to the algorithms in \cite{cdc2021online} (which does not add privacy-preserving noise) and \cite{liu2023differentially} (which adds independent noise).
For a fair comparison, we modify the mini-batch SGD of size \(\tau\) in \cite{liu2023differentially} to \(\tau\) local updates, consistent with our approach.}
Each experiment is conducted 10 times, with the results averaged and displayed alongside error bars representing the standard deviation.

\subsection{Logistic Regression}

We consider the following logistic regression problem:
\[
\min_{x\in\mathbb{R}^{1\times d}} \; \frac{1}{R \tau n}\sum_{r=0}^{R-1}\sum_{t=0}^{\tau-1}\sum_{i=1}^n f_i^{r,t}(x),
\]
where the loss function for learner \(i\) is \[
f_i^{r,t}(x) = \log \left(1 + \exp \left( -\left( x \mathbf{a}_{i}^{r,t} \right) b_{i}^{r,t} \right) \right),
\]
with \( (\mathbf{a}_{i}^{r,t}, b_{i}^{r,t}) \in \mathbb{R}^{d} \times \{-1,+1\} \) representing the feature-label pairs. The data is generated using the method described in \cite{li2020federated}, which allows us to control the degree of heterogeneity using two parameters, \(\alpha\) and \(\beta\).

In our first set of experiments, we set the dimensionality to \(d = 100\) and use \(n = 20\) learners. Each learner is responsible for 4000 clients, who arrive sequentially in steps, with \(\tau = 4\) and \(R = 1000\). The heterogeneity parameters are set to \((\alpha, \beta) = (0.1, 0.1)\), and we experiment with two different privacy budgets: \((\epsilon, \delta) \in \{(2, 10^{-3}), (0.5, 10^{-3})\}\).

The results are presented in Fig.~\ref{fig-lr}. Of the four curves shown, all except the one corresponding to independent noise (which uses a smaller step size) share the same step size. As seen in Fig.~\ref{fig-lr}, under both privacy budgets, the curves for our algorithms with binary tree, optimized factorization, and Toeptitz matrix closely follow the curve of the noiseless case. With a stricter LDP budget \((0.5, 10^{-3})\), the variance in our algorithms increases slightly. {Our algorithms with optimized factorization and Toeplitz matrix outperform binary tree, consistent with the findings in \cite{denisov2022improved, jaja2023almost}.} In contrast, under privacy budgets \((2, 10^{-3})\) and \((0.5, 10^{-3})\),  the method with independent noise has to use a small step size, leading to low accuracy. This highlights the clear advantage of using correlated noise over independent noise.

\begin{figure}
\begin{subfigure}{0.5\textwidth}
  \centering
  \includegraphics[width=7.5cm]{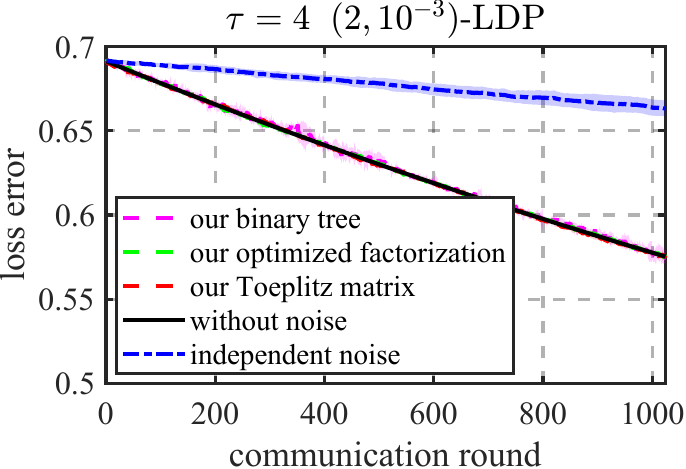}
\end{subfigure}
\vspace{1mm}
\\ 
\begin{subfigure}{0.5\textwidth}
  \centering
  \includegraphics[width=7.5cm]{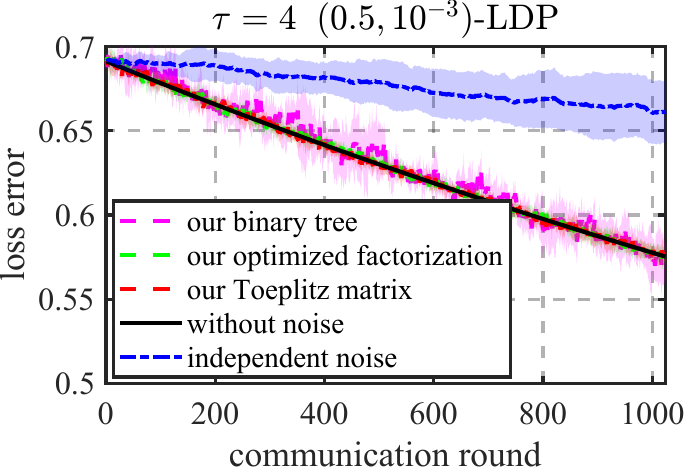}
\end{subfigure}
\caption{Comparison on logistic regression}
\label{fig-lr}
\end{figure}

\begin{figure*}[htbp]
\begin{minipage}[t]{.246\linewidth}
\centering
\centerline{\includegraphics[width=4.55cm]{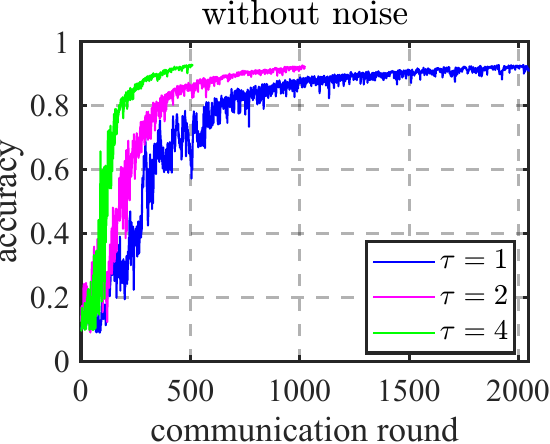}}
\end{minipage}
\begin{minipage}[t]{0.246\linewidth}
\centering
\centerline{\includegraphics[width=4.55cm]{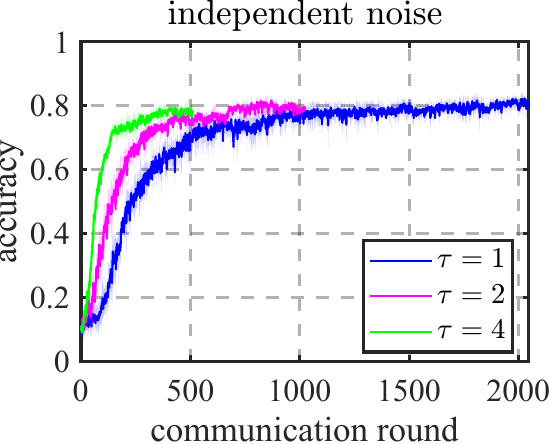}}
\end{minipage}
\begin{minipage}[t]{0.246\linewidth}
\centering
\centerline{\includegraphics[width=4.55cm]{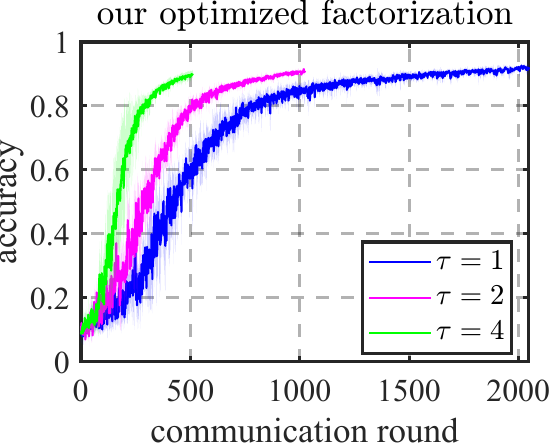}}
\end{minipage}
\begin{minipage}[t]{0.246\linewidth}
\centering
\centerline{\includegraphics[width=4.55cm]{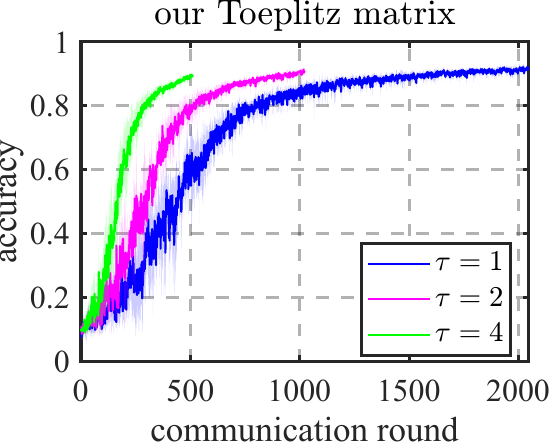}}
\end{minipage}
\caption{{Ablation and comparison on CNN classification under $(2,10^{-3})$-LDP budget}}
\label{fig-cnn}
\end{figure*}

\subsection{Training of Convolutional Neural Networks}

We explore the training of a convolutional neural network (CNN) using the MNIST dataset~\cite{zhang2024secure}.
The CNN architecture includes two convolutional layers, each with 32 filters of size \(3 \times 3\) and a max-pooling layer of size \(2 \times 2\). These layers feed to two fully connected layers, containing 64 and 10 units, respectively. The hidden layers employ ReLU activation functions, while the output layer uses a softmax activation. The training is performed using the cross-entropy loss function.

The MNIST dataset, containing 60,000 images of hand-written digits (0-9), is used for training. To introduce data heterogeneity, 30,000 images are randomly distributed evenly across 10 learners (3,000 per learner). The remaining 30,000 images are distributed unevenly, with all samples of digit \(l\) assigned to learner \(l+1\).  We use 10,000 samples for testing to evaluate the accuracy of the global model on the server. The privacy budget is set to \((2, 10^{-3})\)-LDP, and \(\tau \in \{1,2,4\}\). 

In this set of experiments, we compare algorithms with correlated noise, independent noise, and without DP noise. We set different numbers of local updates \( \tau \) to show its impact on communication efficiency while fixing \( R\tau \) for a fair comparison, ensuring that the total number of data points for all runs remains the same. Our algorithm leverages optimized factorization and Toeplitz matrices to construct correlated noise. For all the algorithms, we use the same step sizes $\eta$ and $\eta_g$. The results are shown in Fig.~\ref{fig-cnn}.  

First, compared to independent noise, using correlated noise results in higher final accuracy. With independent noise, the final accuracy is approximately 0.8, whereas our algorithms, utilizing optimized factorization and Toeplitz matrices, achieve around 0.9.  This highlights the benefits of correlated noise over independent noise.  
The final accuracy of both the noise-free and correlated-noise algorithms is similar. However, in the early stages, the accuracy with correlated noise is lower than in the noise-free case. For example, when \( \tau = 1 \) and the communication round is 500, the accuracy of the noise-free algorithm is around 0.7, while our correlated-noise algorithms reach approximately 0.6.  

Furthermore, we examine the impact of different values of \( \tau \). All four subplots indicate that increasing \( \tau \) from 1 to 4 reduces the number of communication rounds while maintaining a similar level of accuracy. This is because a larger \( \tau \) decreases the communication frequency between learners and the server, thereby reducing the total number of communication rounds.  
Note, however, that  \( \tau \) should not be taken too large. As shown in \eqref{eq-27-11}, when \( \eta \eta_g \) is fixed and \( \tau \) varies, the second error term caused by local updates satisfies \( \frac{\teta^2 B_g^2}{\eta_g^2} = \tau^2 \eta^2 B_g^2 \). Hence, if \( \tau \) is increased too much, the drift error will exceed the error introduced by the DP noise and the accuracy will be reduced.

\section{Conclusions}
We have proposed an LDP algorithm for OFL that uses temporally correlated noise to protect client privacy under adaptive continual release.  To address the challenges caused by DP noise and local updates with streaming non-IID data, we used a perturbed iterate analysis to control the impact of the DP noise on the utility. Moreover, we demonstrated how the drift error from local updates can be managed under a class of  nonconvex loss functions. Subject to a fixed DP budget, we established a dynamic regret bound that explicitly shows the trade-off between utility and privacy.  Numerical results demonstrated the efficiency of our algorithm.

\bibliographystyle{IEEEtran}
\bibliography{autosam} 

\appendix

\subsection{Proof of Corollary \ref{cor:weakcvx}}\label{app-cor:weakcvx}
On the one hand, by weak convexity of $f^r$, we have
\begin{equation*}
f^r(y) \geq f^r(x)+\langle\nabla f^r(x), y-x\rangle-\frac{\mu_{\rm WC}}{2}\|y-x\|_2^2
\end{equation*}
for any $y\in \mathbb{R}^d$. 
Moreover, Lemma \ref{lem:equi} has established that P{\L} (resp. EB) with constant $2c$ (resp. $\sqrt{2cL}$) implies QG with the constant $c$, \emph{i.e.} that 
\begin{align*}
    f^r(x) - (f^r)^{\star} &\geq \frac{c}{2}\Vert x-y\Vert_2^2
\end{align*}
for $y=P_{{\mathcal X}_r^{\star}}^x$. Combining the inequalities for this $y$ yields
\begin{align}\label{eq:wcvxkey1}
0 &\geq \langle\nabla f^r(x), \operatorname{P}_{\mathcal{X}_r^{\star}}^x-x\rangle  + \frac{c-\mu_{\rm WC}}{2}\|x-\operatorname{P}_{\mathcal{X}_r^{\star}}^x\|_2^2.
\end{align}

On the other hand, we know from Assumption~\ref{asm-smooth} that
\begin{align}\label{eq:wcvxkey2}
0
&\ge\frac{1}{n\tau}\sum_{i=1}^n\sum_{t=0}^{\tau-1} (f_i^{r,t} (x)-f_i^{r,t} (\operatorname{P}_{\mathcal{X}_r^{\star}}^x))-\frac{L}{2}\|x-\operatorname{P}_{\mathcal{X}_r^{\star}}^x\|_2^2\notag\\
&= f^r(x)-(f^r)^\star-\frac{L}{2}\|x-\operatorname{P}_{\mathcal{X}_r^{\star}}^x\|_2^2,
\end{align}
where we used $\nabla f^r(\operatorname{P}_{\mathcal{X}_r^{\star}}^x)=0$ in the first inequality.

Mutiplying \eqref{eq:wcvxkey2} by $\alpha\in (0, (c-\mu_{\rm WC})/L)$ and adding the resulting inequality to \eqref{eq:wcvxkey1} yields
\begin{equation}
\begin{aligned}
0 \ge &\ \alpha( f^r(x)-(f^r)^\star)+ \langle \nabla f^r (x), \operatorname{P}_{\mathcal{X}_r^{\star}}^{x} -x\rangle \\
&+ \frac{c-\mu_{\rm WC}-\alpha L}{2}\|\operatorname{P}_{\mathcal{X}_r^{\star}}^{x} -x\|^2.     
\end{aligned}    
\end{equation}
Thus, we have proved that Assumption~\ref{asm-noncvx} is satisfied with $\mu = (c - \mu_{\rm WC} - \alpha L)/2>0$ for any $\alpha$ such that $\alpha \in(0, (c - \mu_{\rm WC})/L)$, which completes the proof of Corollary \ref{cor:weakcvx}.


\subsection{Proof of Lemma~\ref{lem-priv-analy}}\label{app-lem-priv-analy} 
To simplify the derivation, we base our privacy analysis on the concept of $\rho$-zCDP \cite{bun2016concentrated}, where $\rho>0$ is a parameter to measure the privacy loss. According to \cite[Proposition 1.3]{bun2016concentrated}, $\rho$-zCDP can be transferred to $(\epsilon,\delta)$-LDP via 
\begin{equation}
\rho = \left(\sqrt{\epsilon + \ln \frac{1}{\delta}} - \sqrt{\ln \frac{1}{\delta}}\right)^2
\approx \frac{\epsilon^2}{4 \ln \frac{1}{\delta}}.
\end{equation}
In the following, we will establish the relationship between the parameters $\rho$ and $V_i^2$. 

By using the MF technique, our algorithm adds Gaussian DP noise $\bxi_i$ into $\bC \bG_i$ via the Gaussian mechanism 
\begin{equation*}
\bC \bG_i + \bxi_i.     
\end{equation*} 
As shown in \cite[Theorem 2.1]{denisov2022improved}, the MF technique can protect privacy under adaptive continual release, and the parameters are the same as in the non-adaptive continual release setting. 
It is therefore enough for us to analyze privacy in a non-adaptive setting. With 
$\bG_i=\{\nabla f_i^{0,0}(z_i^{0,0});\ldots;\nabla f_i^{R-1,\tau-1}(z_i^{R-1,\tau-1})\}$ and $\bG_i'$ denoting the corresponding gradients evaluated on a neighboring datasets $D_i'$, the sensitivity $\Delta$ of $\bC\bG_i$ satisfies  
\begin{equation}\label{eq-sens}
\begin{aligned}
\Delta:=&\ \| \bC (\bG_i-\bG'_i) \|_F \\
=&\ {\| c^{0,0} \left(\nabla f_i^{0,0}(z_i^{0,0})-\nabla f_i^{0,0'}(z_i^{0,0})\right) \|_F}\\
\le&\ {\max_{r,t} \|c^{r,t}\| \left({2B_g}\right)}.   
\end{aligned}
\end{equation}
Here, without loss of generality, we assume that $D_i$ and $D_i'$ differ in the first entry, implying that $\nabla f_i^{0,0}\neq \nabla f_i^{0,0'}$. The last inequality follows from Assumption~\ref{asm-Bg}.

By~\cite[Proposition 1.6]{bun2016concentrated}, the Gaussian mechanism with variance $V_i^2$ satisfies \(\rho\)-zCDP with \( \rho = \frac{\Delta^2}{2V_i^2} \). Combining this fact with ~\eqref{eq-sens}, we find that our algorithm is $\rho$-zCDP if
\begin{equation}\label{eq-vi}
\begin{aligned}
V_i^2 = \frac{\Delta^2}{2\rho} \le \max_{r,t} \|c^{r,t}\|^2 \left({2B_g}\right)^2\frac{1}{2\rho}, ~\forall i.
\end{aligned}    
\end{equation}
%
This result holds for all MF techniques (i)--(iii). Specifically, for MF technique (iii), by substituting the upper bound of $\max_{r,t} \|c^{r,t}\|^2$ given in \eqref{eq-exmp-iii} into \eqref{eq-vi}, we have 
\begin{equation}\notag
\begin{aligned}
V_i^2  \le  \mathcal{O} \left({\ln (R\tau)}  {B_g^2}  \frac{\ln \frac{1}{\delta} }{\epsilon^2} \right), ~\forall i,
\end{aligned}    
\end{equation} 
which completes the proof of Lemma~\ref{lem-priv-analy}.

\subsection{Proof of Lemma~\ref{lem-u}}\label{app-lem-u}

In the following analysis, we consider MF technique (iii) with established theoretical bounds for $\|b^{r,t}\|^2$ and $\|c^{r,t}\|^2$. 

As shown in \eqref{eq-comp-alg}, mathematically, the local and global updates of our Algorithm~\ref{alg-fl} can be rewritten as
\begin{equation}
\begin{aligned}
z_i^{r,t+1} &=z_i^{r,t}-\eta \left(\nabla f_i^{r,t}(z_i^{r,t})+{(b^{r,t}-b^{r,t-1})\bxi_i}\right), \\
x^{r+1}&=x^r-\teta \Big( g^r + {\frac{1}{\tau}(b^{r, \tau-1}-b^{r-1,\tau-1})\bxi}\Big),
\end{aligned}
\end{equation}
where $z_i^{r,0}=x^r$, $\teta=\eta_g\eta\tau$, 
\begin{equation*}
g^r =\frac{1}{n\tau} \sum_{t=0}^{\tau-1}\sum_{i=1}^n \nabla f_i^{r,t}(z_i^{r,t}),   
\end{equation*}
and $\bxi=\frac{1}{n}\sum_{i=1}^n \bxi_i \sim \mathcal{N}(0,V^2)^{R\tau\times d}$ with $V^2=V_i^2/n$. 

We aim to achieve tighter convergence by exploiting the structure of temporally correlated noise. To this end, we define 
\begin{equation*}
x^{r}_{\xi}:=x^{r}+ {\frac{\teta}{\tau}} b^{r-1,\tau-1}\bxi    
\end{equation*}
and re-write the global update in terms of this new variable
\begin{equation}\label{eq-x-xi}
x^{r+1}_{\xi}= x^{r}_{\xi} -\teta g^r.   
\end{equation}
The corresponding iterates satisfy
\vspace{-1mm}
\begin{align}
&\bE\|x_{\xi}^{r+1}-\operatorname{P}_{\mathcal{X}^{\star}}^{x_{\xi}^{r+1}}\|^2 \label{eq-cvx-r+1-r}\\ \notag
\leq\ &\bE \|x_{\xi}^r-\operatorname{P}_{\mathcal{X}^{\star}}^{x_{\xi}^{r}}-\teta g^r \|^2 \\ \notag
=\ & \bE\|x_{\xi}^r-\operatorname{P}_{\mathcal{X}^{\star}}^{x_{\xi}^{r}}\|^2 + {\teta^2\bE\|g^r\|^2}\\\notag
&{-2\teta \bE\left\langle g^r,x_{\xi}^r-x^r+x^r-\operatorname{P}_{\mathcal{X}^{\star}}^{x^{r}}+\operatorname{P}_{\mathcal{X}^{\star}}^{x^{r}}-\operatorname{P}_{\mathcal{X}^{\star}}^{x_{\xi}^{r}}\right\rangle}\\\notag
\le\ & \bE\|x_{\xi}^r-\operatorname{P}_{\mathcal{X}^{\star}}^{x_{\xi}^{r}} \|^2 \underbrace{-2\teta \bE\left\langle g^r,x^r-\operatorname{P}_{\mathcal{X}^{\star}}^{x^{r}}\right\rangle}_{\rm (IV)}+\teta^2\left(1\!+\!\frac{\theta}{\teta}\right)\underbrace{\bE\|g^r\|^2}_{\rm (V)}\\ \notag
&+ \frac{2\teta}{\theta}\left(\bE\|x_{\xi}^r-x^r\|^2+\bE\|\operatorname{P}_{\mathcal{X}^{\star}}^{x^{r}}-\operatorname{P}_{\mathcal{X}^{\star}}^{x_{\xi}^{r}}\|^2\right),\notag
\end{align} 
where the first inequality follows from the optimality of $\operatorname{P}_{\mathcal{X}^{\star}}^{x_{\xi}^{r+1}}$ and the second inequality uses 
{Young's inequality }$2a^Tb\le \theta_1\|a\|^2+\frac{1}{\theta_1} \|  b \|^2$ for $\theta_1>0$ { with  $\theta_1=\theta $.}
To bound (IV), we use the definition of $g^r$ and add and subtract $\nabla f_i^{r,t}(x^r)$ to find
\begin{align}\label{eq-IV-decomp}
\vspace{-1mm}
&{\rm (IV)}=-2\teta \bE\Big\langle \frac{1}{n\tau}\sum_{t=0}^{\tau-1}\sum_{i=1}^n \nabla f_{i}^{r,t}(x^{r}),x^r-\operatorname{P}_{\mathcal{X}^{\star}}^{x^{r}}\Big\rangle\\
&-2\teta \bE\Big\langle \frac{1}{n\tau}\sum_{t=0}^{\tau-1}\sum_{i=1}^n (\nabla f_{i}^{r,t}(z_{i}^{r,t})-\nabla f_{i}^{r,t}(x^{r})),x^r-\operatorname{P}_{\mathcal{X}^{\star}}^{x^{r}}\Big\rangle.\notag
\end{align}    
We refer to the two terms on the right hand of the above equality as (IV.I) and (IV.II), respectively.  For (IV.I), we use Assumption \ref{asm-noncvx} to get 
\begin{equation*}
\begin{aligned}
&{\rm (IV.I)}\\
= &  -2\teta \bE\Big\langle \frac{1}{n\tau}\sum_{t=0}^{\tau-1}\sum_{i=1}^n \nabla f_{i}^{r,t}(x^{r}),x^r-\operatorname{P}_{\mathcal{X}_r^{\star}}^{x^{r}}+\operatorname{P}_{\mathcal{X}_r^{\star}}^{x^{r}}-\operatorname{P}_{\mathcal{X}^{\star}}^{x^{r}}\Big\rangle\\
\leq  &-\frac{2\teta{\alpha}}{n\tau}\sum_{i=1}^n\sum_{t=0}^{\tau-1} \bE(f_i^{r,t}(x^r)-(f^{r})^\star)-{\teta\mu}\bE\|x^r-\operatorname{P}_{\mathcal{X}_r^{\star}}^{x^{r}}\|^2\\
& +\teta \left(\theta\bE \Big\| \frac{1}{n\tau} \sum_{i=1}^n\sum_{t=0}^{\tau-1}\nabla f_i^{r,t}(x^r)
\Big\|^2+\frac{1}{\theta}\bE\|\operatorname{P}_{\mathcal{X}_r^{\star}}^{x^{r}}-\operatorname{P}_{\mathcal{X}^{\star}}^{x^{r}}\|^2\right),
\end{aligned}    
\end{equation*}
where the inequality follows from $2a^Tb\le \theta\|a\|^2+\frac{1}{\theta} \|  b \|^2$ for $\theta>0$ and the observation that Assumption~\ref{asm-noncvx} implies that
\begin{equation*}
\begin{aligned}
& \Big\langle \frac{1}{n\tau}\sum_{i=1}^n\sum_{t=0}^{\tau-1} \nabla f_i^{r,t}(x), x-\operatorname{P}_{\mathcal{X}_r^{\star}}^{x}\Big\rangle\\
 \ge &\ \frac{{\alpha}}{n\tau}\sum_{i=1}^n\sum_{t=0}^{\tau-1} (f_i^{r,t}(x)-(f^{r})^\star)+\frac{\mu}{2}\|x-\operatorname{P}_{\mathcal{X}_r^{\star}}^{x}\|^2, ~\forall x.
\end{aligned}    
\end{equation*}
For (IV.II), we have
\vspace{-1mm}
\begin{equation*}
\begin{aligned}
&{\rm (IV.II)}\\
=&-\!2\teta \bE\Big\langle \!\frac{1}{n\tau}\sum_{t=0}^{\tau-1}\sum_{i=1}^n (\nabla f_{i}^{r,t}(z_{i}^{r,t})\!-\!\nabla f_{i}^{r,t}(x^{r})),x^r-\operatorname{P}_{\mathcal{X}^{\star}}^{x^{r}}\!\Big\rangle \\
\le\ &  \frac{2\teta}{\mu} \bE\left\|\frac{1}{n\tau}\sum_{t=0}^{\tau-1}\sum_{i=1}^n (\nabla f_{i}^{r,t}(z_{i}^{r,t})-\nabla f_{i}^{r,t}(x^{r})) \right\|^2\notag\\
&+ \frac{\teta \mu}{2}\bE\| x^r-\operatorname{P}_{\mathcal{X}^{\star}}^{x^{r}}\|^2\\
\le\ &  \frac{2\teta}{\mu} \bE\left\|\frac{1}{n\tau}\sum_{t=0}^{\tau-1}\sum_{i=1}^n (\nabla f_{i}^{r,t}(z_{i}^{r,t})-\nabla f_{i}^{r,t}(x^{r})) \right\|^2\notag\\
&+ \teta \mu\bE\left(\| x^r-\operatorname{P}_{\mathcal{X}_r^{\star}}^{x^{r}}\|^2+\|\operatorname{P}_{\mathcal{X}_r^{\star}}^{x^{r}}-\operatorname{P}_{\mathcal{X}^{\star}}^{x^{r}}\|^2\right),
\end{aligned}    
\end{equation*}
where we use Young's inequality with $\theta_1=2/\mu$ in the first inequality and triangle inequality in the second inequality. 
Now, substituting (IV.I) and (IV.II) into (IV) yields
\begin{equation*}
\begin{aligned}
{\rm (IV)}\le & 
-\frac{2\teta{\alpha}}{n\tau}\sum_{i=1}^n\sum_{t=0}^{\tau-1} \bE(f_i^{r,t}(x^r)-(f^{r})^\star)\\
&+\teta \theta\bE \left\| \frac{1}{n\tau} \sum_{i=1}^n\sum_{t=0}^{\tau-1}\nabla f_i^{r,t}(x^r)
\right\|^2\\
&+\teta\left(\frac{1}{\theta}+\mu\right)\bE\|\operatorname{P}_{\mathcal{X}_r^{\star}}^{x^{r}}-\operatorname{P}_{\mathcal{X}^{\star}}^{x^{r}}\|^2\\
& +\frac{2\teta}{\mu}\bE \left\|\frac{1}{n\tau}\sum_{t=0}^{\tau-1}\sum_{i=1}^n (\nabla f_{i}^{r,t}(z_{i}^{r,t})-\nabla f_{i}^{r,t}(x^{r})) \right\|^2.
\end{aligned}    
\end{equation*}
Next, for ({\rm V}) in~\eqref{eq-cvx-r+1-r}, we add and subtract $\nabla f_i^{r,t}(x^r)$ to find
\begin{equation*}
\begin{aligned}
&{\rm (V)}\\  
=\ & \bE\left\| \frac{1}{n\tau} \sum_{i=1}^n\sum_{t=0}^{\tau-1} (\nabla f_i^{r,t}(z_i^{r,t})-\nabla f_i^{r,t}(x^r)+\nabla f_i^{r,t}(x^r)) \right\|^2\\
\le\ & 2\bE\left\| \frac{1}{n\tau} \sum_{i=1}^n\sum_{t=0}^{\tau-1} (\nabla f_i^{r,t}(z_i^{r,t})-\nabla f_i^{r,t}(x^r))\right\|^2\\
&+2\bE \left\| \frac{1}{n\tau} \sum_{i=1}^n\sum_{t=0}^{\tau-1}\nabla f_i^{r,t}(x^r)\right\|^2.
\end{aligned}    
\end{equation*}
Substituting (IV) and (V) into~\eqref{eq-cvx-r+1-r}, we have 
\begin{equation}\label{eq-16}
\begin{aligned}
&\bE\|x_{\xi}^{r+1}-\operatorname{P}_{\mathcal{X}^{\star}}^{x_{\xi}^{r+1}}\|^2 -\bE\|x_{\xi}^{r}-\operatorname{P}_{\mathcal{X}^{\star}}^{x_{\xi}^{r}}\|^2 \\
&\le-\frac{2\teta{\alpha}}{n\tau}\sum_{i=1}^n\sum_{t=0}^{\tau-1} \bE(f_i^{r,t}(x^r)-(f^{r})^\star)\\
&+2c_{{\rm VI}}\cdot\underbrace{\bE\Big\| \frac{1}{n\tau} \sum_{i=1}^n\sum_{t=0}^{\tau-1} (\nabla f_i^{r,t}(z_i^{r,t})-\nabla f_i^{r,t}(x^r))\Big\|^2}_{\rm (VI)}\\
&+2c_{{\rm VII}}\cdot \underbrace{\bE \left\| \frac{1}{n\tau} \sum_{i=1}^n\sum_{t=0}^{\tau-1}\nabla f_i^{r,t}(x^r)
\right\|^2}_{\rm (VII)}\\
&+ \frac{2\teta}{\theta}\left(\bE\|x_{\xi}^r-x^r\|^2+\bE\|\operatorname{P}_{\mathcal{X}^{\star}}^{x^{r}}-\operatorname{P}_{\mathcal{X}^{\star}}^{x_{\xi}^{r}}\|^2\right)\\
&+\teta\left(\frac{1}{\theta}+\mu\right)\bE\|\operatorname{P}_{\mathcal{X}_r^{\star}}^{x^{r}}-\operatorname{P}_{\mathcal{X}^{\star}}^{x^{r}}\|^2,
\end{aligned}       
\end{equation}
where $c_{{\rm VI}}:=\teta^2\left(1+\frac{\theta}{\teta}\right) + \frac{\teta}{\mu}$ and $c_{{\rm VII}}:=\teta^2\left(1+\frac{3\theta}{2\teta}\right)$.

We use $L$-smoothness to bound the term (VI)
\begin{equation*}
{\rm (VI)}
\leq L^2\frac{1}{n\tau} \sum_{i=1}^n\sum_{t=0}^{\tau-1} \bE\|z_i^{r,t}-x^r\|^2. 
\end{equation*}
By repeatedly applying the local updates {and substituting $b^{r,-1}=b^{r-1,\tau-1}$}, we thus have
$z_i^{r,t}=z_{i}^{r,0}-\eta  \Big(\nabla f_i^{r,0}(z_i^{r,0})+\cdots+\nabla f_i^{r,t-1}(z_i^{r,t-1}) + (b^{r,t-1}-b^{r-1,\tau-1})\bxi_i\Big)$ for all $t$.  Since $z_i^{r,0}=x^r$, it follows that  
\begin{align}\label{eq-18}
&\bE\|z_i^{r,t}-x^r\|^2\\ \notag
=&\ \eta^2 \bE\Big\|  \sum_{\tilde{t}=0}^{t-1} \nabla f_i^{r,\tilde{t}}(z_i^{r,\tilde{t}}) + {(b^{r,t-1}-b^{r-1,\tau-1})\bxi_i} \Big\|^2 \\\notag
\le&\  \eta^2\bE\left(2\Big\|  \sum_{\tilde{t}=0}^{t-1} \nabla f_i^{r,\tilde{t}}(z_i^{r,\tilde{t}})\Big\|^2 + 2\Big\|{(b^{r,t-1}-b^{r-1,\tau-1})\bxi_i} \Big\|^2\right) \\\notag
\le&\ \eta^2 (2\tau^2 B_g^2 +{8 \max_{r,t}\|b^{r,t}\|^2 d V_i^2}),   
\end{align}    
{where we use
$2\bE\|{(b^{r,t-1}-b^{r-1,\tau-1})\bxi_i}\|^2\le 4 (\bE\|b^{r,t-1}\bxi_i\|^2$  $+\bE\|b^{r-1,\tau-1}\bxi_i\|^2 )$
and the bound $\bE \|b^{r,t}\bxi_i\|^2= \|b^{r,t}\|^2 dV_i^2 \le$ $\max_{r,t} \|b^{r,t}\|^2d V_i^2$ for all $r,t$ in the last step.}

Substituting~\eqref{eq-18} into (VI) {and $V_i^2=n V^2$} now gives
\begin{equation*}
{\rm (VI)}\le L^2 \eta^2 (2\tau^2 B_g^2 +{8 \max_{r,t}\|b^{r,t}\|^2 d nV^2}).  
\end{equation*}
To handle the term (VII), we use the $L$-smoothness to bound the gradient norm with loss value suboptimality. To this end, we start with the following inequality
\begin{equation*}
f_i^{r,t}(y)\le f_i^{r,t}(x)+\langle \nabla f_i^{r,t}(x), y-x \rangle+\frac{L}{2}\|y-x\|^2.  
\end{equation*}
By averaging the above inequality over $t$ and $i$ and optimizing both sides of the resulting inequality w.r.t. $y$, 
we get 
\begin{equation*}
\begin{aligned}
&(f^{r})^{\star}\le \min_y	\frac{1}{n\tau}\sum_{i=1}^n\sum_{t=0}^{\tau-1} f_i^{r,t}(y)\\
\leq\ & \frac{1}{n\tau}\sum_{i=1}^n\sum_{t=0}^{\tau-1}  f_i^{r,t}(x)- \frac{1}{2L}\left\|\frac{1}{n\tau}\sum_{i=1}^n\sum_{t=0}^{\tau-1} \nabla f_i^{r,t}(x) \right\|^2\\
& +\frac{L}{2}\min_{y}\left\{\left\| y-\left(x-  \frac{1}{n\tau L}\sum_{i=1}^n\sum_{t=0}^{\tau-1} \nabla f_i^{r,t}(x) \right) \right\|^2\right\}\\
=\ & \frac{1}{n\tau}\sum_{i=1}^n\sum_{t=0}^{\tau-1} f_i^{r,t}(x)- \frac{1}{2L}\left\|\frac{1}{n\tau}\sum_{i=1}^n\sum_{t=0}^{\tau-1} \nabla f_i^{r,t}(x) \right\|^2,
\end{aligned} 
\end{equation*}
which implies that
\begin{equation*}
\begin{aligned}
\Big\|\frac{1}{n\tau}\sum_{i=1}^n\sum_{t=0}^{\tau-1}\nabla  f_i^{r,t}(x)\Big\|^2\!\le 2L\Big(\frac{1}{n\tau}\sum_{i=1}^n\sum_{t=0}^{\tau-1} f_i^{r,t}(x)\!-\! (f^{r})^{\star} \Big).
\end{aligned}    
\end{equation*}
Substituting this inequality into (VII), we have 
\begin{equation*}
\begin{aligned}
{\rm (VII)}
&= \bE \Big\| \frac{1}{n\tau} \sum_{i=1}^n\sum_{t=0}^{\tau-1}\nabla f_i^{r,t}(x^r)\Big\|^2\\
&\le \frac{1}{n\tau} \sum_{i=1}^n\sum_{t=0}^{\tau-1} 2L \bE( f_i^{r,t}(x^r) -(f^{r})^{\star}  ).
\end{aligned}    
\end{equation*}
Next, we combine the derived upper bounds for (VI) and (VII), the expressions for $c_{\rm VI}$ and $c_{\rm VII}$ and~\eqref{eq-16} to find
\begin{align}
&\bE\|x_{\xi}^{r+1}-\operatorname{P}_{\mathcal{X}^{\star}}^{x_{\xi}^{r+1}}\|^2 -\bE\|x_{\xi}^{r}-\operatorname{P}_{\mathcal{X}^{\star}}^{x_{\xi}^{r}}\|^2\label{eq-a17} \\\notag
\le& \underbrace{\left( \teta^2\left(1+\frac{3\theta}{2\teta}\right)4L -{2\teta{\alpha}}\right)}_{\le -\teta{\alpha}} \frac{1}{n\tau}\sum_{i=1}^n\sum_{t=0}^{\tau-1} \bE(f_i^{r,t}(x^r)-(f^{r})^{\star})\\\notag
&+\left( \teta^2 \left(1+\frac{\theta}{\teta}\right) + \frac{\teta}{\mu}\right)2L^2\eta^2 (2\tau^2 B_g^2 +{8 \max_{r,t}\|b^{r,t}\|^2 d nV^2})   \\\notag
&+ \frac{2\teta}{\theta}\left(\bE\|x_{\xi}^r-x^r\|^2+\bE\|\operatorname{P}_{\mathcal{X}^{\star}}^{x^{r}}-\operatorname{P}_{\mathcal{X}^{\star}}^{x_{\xi}^{r}}\|^2\right)\\\notag
&+\teta\left(\frac{1}{\theta}+\mu\right)\bE\|\operatorname{P}_{\mathcal{X}_r^{\star}}^{x^{r}}-\operatorname{P}_{\mathcal{X}^{\star}}^{x^{r}}\|^2.
\end{align}       
We now need to choose $\theta$ to guarantee that $ \left( \teta^2\left(1+\frac{3\theta}{2\teta}\right)4L-{2\teta{ \alpha}} \right)\le -\teta{ \alpha} $, which holds if 
\begin{equation}\label{eqtheta}
\teta + \frac{3}{2}\theta \le \frac{{\alpha}}{4L}.    
\end{equation}
The condition \eqref{eqtheta} will be checked later when we choose the specific algorithm parameters. 

Dividing both sides of \eqref{eq-a17} by $\teta$, we obtain 
\begin{align}\label{eq-24}
&\frac{1}{\teta}\bE\|x_{\xi}^{r+1}-\operatorname{P}_{\mathcal{X}^{\star}}^{x_{\xi}^{r+1}}\|^2 -\frac{1}{\teta}\bE\|x_{\xi}^{r}-\operatorname{P}_{\mathcal{X}^{\star}}^{x_{\xi}^{r}}\|^2 \\ \notag
\le & -  \frac{{\alpha}}{n\tau}\sum_{i=1}^n\sum_{t=0}^{\tau-1} \bE(f_i^{r,t}(x^r)-(f^{r})^{\star})\\\notag
&+\left( \teta \left(1+\frac{\theta}{\teta}\right) + \frac{1}{\mu}\right)2L^2\eta^2 (2\tau^2 B_g^2 +{8 \max_{r,t}\|b^{r,t}\|^2 d nV^2})\\\notag
&+ \frac{2}{\theta}\left(\bE\|x_{\xi}^r-x^r\|^2+\bE\|\operatorname{P}_{\mathcal{X}^{\star}}^{x^{r}}-\operatorname{P}_{\mathcal{X}^{\star}}^{x_{\xi}^{r}}\|^2\right)\\\notag
&+\left(\frac{1}{\theta}+\mu\right)\bE\|\operatorname{P}_{\mathcal{X}_r^{\star}}^{x^{r}}-\operatorname{P}_{\mathcal{X}^{\star}}^{x^{r}}\|^2.\notag
\end{align}         
For notational convenience, we dote the last three terms in the right-hand of~\eqref{eq-24} by $S^r$. We then rewrite the results as
\begin{equation}\label{eq-25}
\begin{aligned}
&\frac{{\alpha}}{n\tau}\sum_{i=1}^n\sum_{t=0}^{\tau-1} \bE(f_i^{r,t}(x^r)-(f^{r})^{\star})\\
\le\ & \frac{1}{\teta} \bE\|x_{\xi}^{r}-\operatorname{P}_{\mathcal{X}^{\star}}^{x_{\xi}^{r}}\|^2 -  \frac{1}{\teta}\bE\|x_{\xi}^{r+1}-\operatorname{P}_{\mathcal{X}^{\star}}^{x_{\xi}^{r+1}}\|^2 + S^r.    
\end{aligned}    
\end{equation}
Repeated application of~\eqref{eq-25} and use of Assumption~\ref{asm-regular} gives 
\begin{equation}\label{eq-26}
\begin{aligned}
& \frac{1}{R}\sum_{r=0}^{R-1}\frac{1}{n\tau}\sum_{i=1}^n\sum_{t=0}^{\tau-1} \bE(f_i^{r,t}(x^r)-(f^{r})^{\star})\\  
\le\ & \frac{\bE\|x_{\xi}^0-\operatorname{P}_{\mathcal{X}^{\star}}^{x_{\xi}^{0}}\|^2}{{\alpha}R\teta}  + \frac{\sum_{r=0}^{R-1} S^r}{{\alpha}R}.
\end{aligned}    
\end{equation}
Substituting $x_{\xi}^r-x^r=\frac{\teta}{\tau}b^{r-1,\tau-1}\bxi$ and Assumption \ref{asm-regular} yields
\begin{align}\label{eq-27}
&\frac{1}{R}\sum_{r=0}^{R-1} S^r\\\notag
\le& \left( \teta \left(1+\frac{\theta}{\teta}\right) + \frac{1}{\mu}\right)2L^2\eta^2(2\tau^2 B_g^2 +{8 \max_{r,t}\|b^{r,t}\|^2 d nV^2})\\\notag
+& \!\frac{1}{\theta R}\!\sum_{r=0}^{R-1}(2(1\!+\!\sigma)\bE\| {\frac{\teta}{\tau}}b^{r-1,\tau-1}\bxi\|^2+(1\!+\!\theta\mu)\bE\|\operatorname{P}_{\mathcal{X}_r^{\star}}^{x^{r}}\!-\!\operatorname{P}_{\mathcal{X}^{\star}}^{x^{r}}\|^2).
\end{align}     
   
Using the fact that 
\begin{equation}\label{eq-B-xi}
\bE \|b^{r-1,\tau-1}\bxi\|^2= \|b^{r-1,\tau-1}\|^2 dV^2,
\end{equation}
we can use~\eqref{eq-26} to bound the regret as
\begin{equation*}
\begin{aligned}
& \frac{\reg}{R\tau}:=\frac{1}{R}\sum_{r=0}^{R-1}\frac{1}{n\tau}\sum_{i=1}^n\sum_{t=0}^{\tau-1} \bE(f_i^{r,t}(x^r)-(f^{r})^{\star})\\  
\le& \frac{\bE\|x_{\xi}^0-\operatorname{P}_{\mathcal{X}^{\star}}^{x_{\xi}^{0}}\|^2}{{\alpha}R\teta} \\
+& \frac{2(1+\sigma)\teta^2}{{\alpha}\theta R} {\|  \bB_R \|_F^2} {\frac{dV^2}{\tau^2}} \!+\!\frac{1}{{\alpha}\theta R}\sum_{r=0}^{R-1}(1+\theta\mu)\bE\|\operatorname{P}_{\mathcal{X}_r^{\star}}^{x^{r}}-\operatorname{P}_{\mathcal{X}^{\star}}^{x^{r}}\|^2\\
+& {\frac{1}{ \alpha}} \left( \teta \left(1+\frac{\theta}{\teta}\right) + \frac{1}{\mu}\right)2L^2\eta^2(2\tau^2 B_g^2 +{8 \max_{r,t}\|b^{r,t}\|^2 d nV^2}),
\end{aligned}      
\end{equation*}
where $\|\bB_R\|_F^2:=\|b^{0,\tau-1}\|^2+\cdots+\|b^{R-1,\tau-1}\|^2$.
{Due to the last two terms in the above inequality, we need a large $\theta$; but $\theta$ must satisfy condition \eqref{eqtheta}. Inspired by this, taking half of the upper bound $\alpha/(4L)$ in condition \eqref{eqtheta} as $\frac{3}{2}\theta$, then we get}
\begin{equation*}
\theta=\frac{{\alpha}}{12L} \quad \text{and} \quad\teta\le \frac{{\alpha}}{8L}.
\end{equation*} 
Substituting $\theta=\frac{{\alpha}}{12L}$, then we have 
\begin{align}\label{eq-utility}
&\frac{\reg}{R\tau}  
\le \frac{\bE\|x_{\xi}^0-\operatorname{P}_{\mathcal{X}^{\star}}^{x_{\xi}^{0}}\|^2}{{\alpha}R\teta}\\\notag
&+\frac{24L(1+\sigma)\teta^2}{ \alpha^2 R} {\|\bB_R \|_F^2}  {\frac{dV^2}{\tau^2}}+
\frac{\left(12L+{\alpha}{\mu}\right)C_{R}}{{ \alpha^2R}}\\\notag
&+ {\frac{1}{\alpha}} \left( \teta+\frac{{\alpha}}{12L} + \frac{1}{\mu}\right)2L^2\eta^2(2\tau^2 B_g^2 +{8 \max_{r,t}\|b^{r,t}\|^2 d nV^2}),
\end{align}    
where  $C_{R}:=\sum_{r=0}^{R-1}\bE\|\operatorname{P}_{\mathcal{X}_r^{\star}}^{x^{r}}-\operatorname{P}_{\mathcal{X}^{\star}}^{x^{r}}\|^2$. This completes the proof of Lemma~\ref{lem-u}.

\subsection{Proof of Theorem~\ref{thm-up}}\label{appthm-up}
In the following, we substitute the specific parameter values given by our DP analysis and choose the step sizes to simplify \eqref{eq-utility}. Substituting the fact given in \eqref{eq-exmp-iii} that 
\begin{equation*}
\begin{aligned}
&\max_{r,t}\|b^{r,t}\|^2\le \mathcal{O}(\ln (R\tau)),  \\
&\|\bB_R\|_F^2:=\|b^{0,\tau-1}\|^2+\cdots+\|b^{R-1,\tau-1}\|^2\le \mathcal{O}(R\ln (R\tau))   
\end{aligned}    
\end{equation*}
into~\eqref{eq-utility}, we have
\begin{equation*}
\begin{aligned}
\frac{\reg}{R\tau} 
\!\le\! \mathcal{O} \left(\! \frac{1}{R\teta}  + \frac{\teta^2}{\eta_g^2} B_g^2 + \left(1\!+\!\frac{n}{\eta_g^2}\right) \teta^{2} \ln (R\tau) \frac{dV^2}{{\tau^2}}  + \frac{C_{R}}{R}\!\right),     
\end{aligned}  
\end{equation*}
where we use $\teta=\eta\eta_g\tau$. 
Next, we use the bound 
\begin{equation}
V^2=\frac{V_i^2}{n}\le \mathcal{O} \left({\ln (R\tau)}  \frac{B_g^2}{n}  \frac{(\ln \frac{1}{\delta}) }{\epsilon^2} \right)      
\end{equation}
to get
\begin{equation}\label{eq-27-1}
\begin{aligned}
&\frac{\reg}{R\tau} \le \ \mathcal{O} \left( \frac{1}{R\teta}  + \frac{\teta^2}{\eta_g^2} B_g^2 +  \frac{C_{R}}{R} \right.\\
&\left.+ \left(1+\frac{n}{\eta_g^2}\right)\teta^{2} \ln (R\tau)  d{\ln (R\tau)} \frac{B_g^2}{n}  \frac{(\ln \frac{1}{\delta}) }{{\tau^2}\epsilon^2} \right).    
\end{aligned}    
\end{equation}
This completes the proof of \eqref{eq-27-11}. 

Let $\mathcal{O} (\frac{1}{R\teta })= \mathcal{O}(\teta^2 (\ln (R\tau))^2)$ and consider condition \eqref{eqtheta}, then we have $\teta=\mathcal{O}\left({\min}\left\{  \mathcal{O}( R^{-\frac{1}{3}} (\ln (R\tau))^{-\frac{2}{3}}),\frac{\alpha}{8L}\right\} \right) = \mathcal{O}\left( R^{-\frac{1}{3}} (\ln (R\tau))^{-\frac{2}{3}}\right)$ and derive 
\begin{equation*}
\begin{aligned}
\frac{\reg}{R\tau}\le\ &\mathcal{O}\left( \frac{(\ln (R\tau))^{\frac{2}{3}}}{R^{\frac{2}{3}}} +\frac{B_g^2}{\eta_g^2 R^{\frac{2}{3}}(\ln (R\tau))^{\frac{4}{3}}} \right.\\
&\left.+ \left(1+\frac{n}{\eta_g^2}\right) \frac{(\ln (R\tau))^{\frac{2}{3}}}{R^{\frac{2}{3}}}\frac{d B_g^2(\ln \frac{1}{\delta})}{n {\tau^2} \epsilon^2}+\frac{C_{R}}{R}\right).    
\end{aligned}    
\end{equation*}
This completes the proof of Theorem~\ref{thm-up}.

\clearpage
\twocolumn[
\begin{@twocolumnfalse}
	\section*{\centering{Supplementary Material for \\ \emph{ Locally Differentially Private Online Federated Learning With Correlated Noise	\\[15pt]}}}
\end{@twocolumnfalse}
]
\setcounter{subsection}{0}

{ 
\subsection{Proof of the equivalence between \eqref{eq-comp-alg} and Algorithm~\ref{alg-fl}}\label{app-equi-compact}
\begin{proof}
With Line 16 and Line 13 of Algorithm \ref{alg-fl}, we have
\begin{equation*}
\begin{aligned}
x^{r+1}&=x^r-\teta \frac{1}{n}\sum_{i=1}^n \left(\frac{1}{\tau} \sum_{t=0}^{\tau-1} \hat{\nabla} f_i^{r,t} \right) \\
&=x^r-\teta \frac{1}{n}\sum_{i=1}^n \frac{1}{\tau} \sum_{t=0}^{\tau-1} \left(S_{i}^{r,t}-S_i^{r,t-1} \right)\\
&= x^r-\teta \frac{1}{n}\sum_{i=1}^n \frac{1}{\tau} \sum_{t=0}^{\tau-1} \left(\nabla f_i^{r,t}(z_i^{r,t})+( b^{r,t}-b^{r,t-1})\bxi_i \right)\\
&=  x^r-\teta \left(g^r+\frac{1}{\tau} \sum_{t=0}^{\tau-1}( b^{r,t}-b^{r,t-1}) \bxi\right)\\
&=  x^r-\teta \left(g^r+\frac{1}{\tau}( b^{r,\tau-1}-b^{r-1,\tau-1})\bxi\right),
\end{aligned}    
\end{equation*}  
where the second equality is due to Line 10 in Algorithm \ref{alg-fl}, the third equality is due to Line 9 in Algorithm \ref{alg-fl}, the fourth equality is due to the definition of $g^r$ and $\bxi$, and the last equality is due to $b^{r,-1}=b^{r-1,\tau-1}$.
\end{proof}
}
\vspace{-2cm}
\subsection{Proof of Corollary~\ref{lem-u-IID}: independent noise}\label{app-lem-u-IID}

With independent noise, the algorithm becomes
\begin{align}\label{eqn:indp}
&z_i^{r,t+1} =z_i^{r,t}-\eta(\nabla f_i^{r,t}(z_i^{r,t})+{\xi_i^{r,t} }), \\ \notag
&x^{r+1}=x^r-\teta \Bigg( \underbrace{\frac{1}{n\tau} \sum_{t=0}^{\tau-1}\sum_{i=1}^n \nabla f_i^{r,t}(z_i^{r,t})}_{:=g^r} + \underbrace{\frac{1}{n\tau} \sum_{t=0}^{\tau-1}\sum_{i=1}^n  \xi_i^{r,t}}_{:=\xi^r}\Bigg),
\end{align}
where $\xi_i^{r,t}\sim \mathcal{N}(0, V_i^2)^{1\times d}$. 
In this way,~\eqref{eq-cvx-r+1-r} simplifies to
\begin{equation}\label{eq-cvx-r+1-r-indp}
\begin{aligned}
&\bE\|x^{r+1}-\operatorname{P}_{\mathcal{X}^{\star}}^{x^{r+1}}\|^2\\
\leq\ &\bE \|x^r-\operatorname{P}_{\mathcal{X}^{\star}}^{x^{r}}-\teta (g^r+\xi^r) \|^2 \\ 
\le\ & \bE\|x^r-\operatorname{P}_{\mathcal{X}^{\star}}^{x^{r}} \|^2+ 2\teta^2d\frac{V^2}{{\tau}} \\
&\underbrace{-2\teta \bE\left\langle g^r,x^r-\operatorname{P}_{\mathcal{X}^{\star}}^{x^{r}}\right\rangle}_{\rm (IV)}+2\teta^2\underbrace{\bE\|g^r\|^2}_{\rm (V)},
\end{aligned} 
\end{equation}
where we use  $\bE[\langle\xi^r, x^r-\operatorname{P}_{\mathcal{X}^{\star}}^{x^{r}}\rangle|\mathcal{F}^r]=0$ where $\mathcal{F}^r$ is the sigma-algebra generated by all the randomness up to communication round $r$,  $\bE\|\xi^r\|^2\le \frac{1}{n\tau} V_i^2$ and $V^2=V_i^2/n$ for all $i$. 

Following a similar strategy to bound $\rm (IV)$ and $\rm (V)$ as before, \eqref{eq-16} becomes 
\begin{equation}\label{eq-16-1}
\begin{aligned}
&\bE\|x^{r+1}-\operatorname{P}_{\mathcal{X}^{\star}}^{x^{r+1}}\|^2 -\bE\|x^{r}-\operatorname{P}_{\mathcal{X}^{\star}}^{x^{r}}\|^2 \\
&\le-\frac{2\teta{\alpha}}{n\tau}\sum_{i=1}^n\sum_{t=0}^{\tau-1} \bE(f_i^{r,t}(x^r)-(f^{r})^\star)+2\teta^2d\frac{V^2}{{\tau}}\\
&+2c_{{\rm VI}}\cdot\underbrace{\bE\Big\| \frac{1}{n\tau} \sum_{i=1}^n\sum_{t=0}^{\tau-1} (\nabla f_i^{r,t}(z_i^{r,t})-\nabla f_i^{r,t}(x^r))\Big\|^2}_{\rm (VI)}\\
&+2c_{{\rm VII}}\cdot \underbrace{\bE \left\| \frac{1}{n\tau} \sum_{i=1}^n\sum_{t=0}^{\tau-1}\nabla f_i^{r,t}(x^r)
\right\|^2}_{\rm (VII)}\\
&+\teta\left(\frac{1}{\theta}+\mu\right)\bE\|\operatorname{P}_{\mathcal{X}_r^{\star}}^{x^{r}}-\operatorname{P}_{\mathcal{X}^{\star}}^{x^{r}}\|^2,
\end{aligned}       
\end{equation}
where   $c_{{\rm VI}}:=2\teta^2 + \frac{\teta}{\mu}$ and $c_{{\rm VII}}:=\teta^2\left(2+\frac{\theta}{2\teta}\right)$. 

By repeated  application of the local updates, we thus have
{  
$z_i^{r,t}=z_{i}^{r,0}-\eta (\nabla f_i^{r,0}(z_i^{r,0})+\cdots+\nabla f_i^{r,t-1}(z_i^{r,t-1}) + \xi_i^{r,0}+\cdots+\xi_i^{r,t-1} )$} for all $t$.  By substituting $z_i^{r,0}=x^r$, 
$\|\nabla f_i^{r,t}(x)\|\le B_g$ for any $x$, we have 
\begin{equation*}
\begin{aligned}
\bE\|z_i^{r,t}-x^r\|^2=&\ \eta^2\bE \Big\|  \sum_{\tilde{t}=0}^{t-1} \nabla f_i^{r,\tilde{t}}(z_i^{r,\tilde{t}}) + {\xi_i^{r,0}+\cdots+\xi_i^{r,t-1}} \Big\|^2 \\
\le&\ \eta^2 (2\tau^2 B_g^2 +{2 \tau d nV^2}),   
\end{aligned}    
\end{equation*}
{where we use that $\xi_i^{r,t}$ are IID for all $r,t$, $ \bE\|\xi_i^{r,0}+\cdots+\xi_i^{r,t-1}\|^2=\bE\|\xi_i^{r,0}\|^2+\cdots+\bE\|\xi_i^{r,t-1}\|^2 \le \tau d V_i^2 $, and $V_i^2=nV^2$  in the last inequality.}
With the above inequality, the terms $\rm (VI)$ and $\rm (VII)$ can be bounded similarly as before and \eqref{eq-a17} becomes
\begin{equation}\label{eq-a17-indp}
\begin{aligned}
&\bE\|x^{r+1}-\operatorname{P}_{\mathcal{X}^{\star}}^{x^{r+1}}\|^2 -\bE\|x^{r}-\operatorname{P}_{\mathcal{X}^{\star}}^{x^{r}}\|^2 \\ \notag
\le& \underbrace{\left(\teta^2\left(4+\frac{\theta}{\teta}\right)2L -2\teta{\alpha}\right)}_{\le -\teta{\alpha}/2} \frac{1}{n\tau}\sum_{i=1}^n\sum_{t=0}^{\tau-1} \bE(f_i^{r,t}(x^r)-(f^{r})^{\star})\\
&+\left( 2\teta^2 + \frac{\teta}{\mu}\right)2L^2\eta^2 (2\tau^2 B_g^2 +{2 \tau d nV^2}) \\ \notag
&+2\teta^2 d \frac{V^2}{{\tau}} 
+\teta\left(\frac{1}{\theta}+\mu\right)\bE\|\operatorname{P}_{\mathcal{X}_r^{\star}}^{x^{r}}-\operatorname{P}_{\mathcal{X}^{\star}}^{x^{r}}\|^2,
\end{aligned} 
\end{equation}
if we require that 
\begin{equation}\label{eqtheta-iid}
2\teta + \frac{1}{2}\theta \le \frac{{3\alpha}}{8L}.
\end{equation}
This condition can be satisfied by setting 
$\theta=\frac{{\alpha}}{12L}$ and $\teta\le \frac{{\alpha}}{8L} $. 

Moreover, with
\begin{equation}\label{eq-27-indp}
\begin{aligned}
\frac{1}{R}\sum_{r=0}^{R-1} S^r
\le&\ \frac{1}{{\alpha}}\left( 2\teta + \frac{1}{\mu}\right)4L^2\eta^2(\tau^2 B_g^2 +{ \tau d nV^2})+\frac{2\teta d}{{\alpha}} {\frac{V^2}{\tau} }\\
&+ \frac{1}{{\alpha} R}\sum_{r=0}^{R-1}\left(\frac{1}{\theta}+\mu\right) \bE\|\operatorname{P}_{\mathcal{X}_r^{\star}}^{x^{r}}-\operatorname{P}_{\mathcal{X}^{\star}}^{x^{r}}\|^2,
\end{aligned}    
\end{equation}
we get the regret bound
\begin{equation*}
\begin{aligned}
\frac{\reg}{R\tau}  
\le\ & \frac{\bE\|x^0-\operatorname{P}_{\mathcal{X}^{\star}}^{x^{0}}\|^2}{{\alpha}R\teta} +\frac{2\teta d}{\alpha}{{\frac{V^2}{\tau}}}+\frac{\left(12L+{\alpha\mu}\right)C_{R}}{\alpha^2 R}\\
&+\frac{1}{\alpha}\left( 2\teta + \frac{1}{\mu}\right)4L^2\eta^2(\tau^2 B_g^2 +{ \tau d nV^2}).
\end{aligned}    
\end{equation*}
With the privacy analysis in Lemma \ref{lem-priv-analy}, we know that 
\begin{equation}
V^2= V_i^2/n =  \mathcal{O} \left(  \frac{B_g^2}{n}  \frac{(\ln \frac{1}{\delta}) }{\epsilon^2} \right).  
\end{equation}
Substituting this expression in our regret bound yields
\begin{equation*}
\begin{aligned}
\frac{\reg}{R\tau} 
\le\mathcal{O}\!\left( \frac{1}{R\teta} \! +\! \frac{\teta^2}{\eta_g^2} B_g^2 \!+\! \Big(1+\frac{\teta n}{\eta_g^2}\Big) {\teta} d  \frac{B_g^2}{n}  \frac{(\ln \frac{1}{\delta}) }{{\tau}\epsilon^2} \! +\! \frac{C_{R}}{R}\right).    
\end{aligned}    
\end{equation*}
Comparing the above inequality for independent noise with \eqref{eq-27-1} for correlated noise, 
if the step size $\teta$ satisfies 
\begin{equation*}
\teta\le   \frac{\tau}{\left(1+\frac{n}{\eta_g^2}\right) (\ln(R\tau))^2},  
\end{equation*}
correlated noise is provably better than independent noise. 

Let $\mathcal{O}(\frac{1}{R\teta })= \mathcal{O}(\teta) $ and consider condition \eqref{eqtheta-iid}, then $\teta= \mathcal{O}\left(\min\left\{\mathcal{O}( R^{-\frac{1}{2}}), \frac{\alpha}{8L} \right\}\right) = \mathcal{O}( R^{-\frac{1}{2}})$ and  
\begin{equation*}
\begin{aligned}
\frac{\reg}{R\tau}\le\mathcal{O}\left( \frac{1}{R^{\frac{1}{2}}} +\frac{B_g^2}{\eta_g^2 R} + \frac{1}{R^{\frac{1}{2}}}\frac{d B_g^2(\ln \frac{1}{\delta})}{n{\tau}\epsilon^2}+\frac{C_{R}}{R}\right).    
\end{aligned}    
\end{equation*}
This completes the proof of Corollary~\ref{lem-u-IID}.

\subsection{Dynamic regret under SC}\label{app-sc-dynamic}

The proof in this section is auxiliary and serves two purposes: i) to prepare for the proof of Corollary~\ref{coro} on static regret under SC in the next section, and ii) to provide a comparative result, demonstrating that even under SC, using dynamic regret does not improve the upper bound on regret with respect to its dependence on $C_R$.

For the SC case, we know from~\eqref{eq-cvx-r+1-r} and the fact that $x^\star=\operatorname{P}_{\mathcal{X}^{\star}}^{x^{r}}=\operatorname{P}_{\mathcal{X}^{\star}}^{x_{\xi}^{r}}$ that 
\begin{equation*}
\begin{aligned}
&\bE\|x_{\xi}^{r+1}-x^\star\|^2\\
\le\ & \bE\|x_{\xi}^r-x^\star \|^2 \underbrace{-2\teta \bE\left\langle g^r,x^r-x^\star\right\rangle}_{\rm (IV)}+\teta^2\left(1+\frac{\theta}{\teta}\right)\underbrace{\bE\|g^r\|^2}_{\rm (V)}\\ 
&+ \frac{\teta}{\theta}\bE\|x_{\xi}^r-x^r\|^2.
\end{aligned} 
\end{equation*}

{We use a similar decomposition of (IV) as in \eqref{eq-IV-decomp} and denote the two terms obtained as (IV.I) and (IV.II), respectively.}
The bound for (IV.I) can be simplified to
\begin{equation*}
{\rm (IV.I)}
\le \! -\frac{2\teta}{n\tau}\sum_{i=1}^n\sum_{t=0}^{\tau-1} \bE(f_i^{r,t}(x^r)-f_i^{r,t}(x^\star)-{\teta\mu}\bE\|x^r-x^\star\|^2,
\end{equation*}
where we have used that strong convexity (SC) implies 
{\begin{equation*}
\begin{aligned}
&\frac{1}{n\tau}\sum_{i=1}^n\sum_{t=0}^{\tau-1} (f_i^{r,t}(x)-f_i^{r,t}(x^\star))+\frac{\mu}{2}\|x-x^\star\|^2\\
\le& \left\langle \frac{1}{n\tau}\sum_{i=1}^n\sum_{t=0}^{\tau-1} \nabla f_i^{r,t}(x), x-x^\star\right\rangle, ~\forall x.
\end{aligned}    
\end{equation*}}
Using the corresponding simplified (IV.II), we find 
\begin{equation*}
\begin{aligned}
{\rm (IV)}\le & 
-\frac{2\teta}{n\tau}\sum_{i=1}^n\sum_{t=0}^{\tau-1} \bE(f_i^{r,t}(x^r)-f_i^{r,t}(x^\star))\\
& +\frac{\teta}{\mu}\bE \Big\|\frac{1}{n\tau}\sum_{t=0}^{\tau-1}\sum_{i=1}^n (\nabla f_{i}^{r,t}(z_{i}^{r,t})-\nabla f_{i}^{r,t}(x^{r})) \Big\|^2.
\end{aligned}    
\end{equation*}
Following a similar analysis as before, \eqref{eq-16} becomes 
\begin{align}
&\bE\|x_{\xi}^{r+1}-x^{\star}\|^2 -\bE\|x_{\xi}^{r}-x^{\star}\|^2 \label{eq-16-sc}\\ \notag
&\le-\frac{2\teta}{n\tau}\sum_{i=1}^n\sum_{t=0}^{\tau-1} \bE(f_i^{r,t}(x^r)-f_i^{r,t}(x^\star) \underbrace{-(f^r)^{\star} + (f^r)^{\star})}_{\text{for dynamic Regret}}\\ \notag
&+2c_{{\rm VI} }\cdot\underbrace{\bE\Big\| \frac{1}{n\tau} \sum_{i=1}^n\sum_{t=0}^{\tau-1} (\nabla f_i^{r,t}(z_i^{r,t})-\nabla f_i^{r,t}(x^r))\Big\|^2}_{{\rm (VI)} \le L^2\eta^2(2\tau^2 B_g^2 +{8 \max_{r,t}\|b^{r,t}\|^2 d nV^2}) } \\\notag
&+2c_{{\rm VII}}\cdot \underbrace{\bE \left\| \frac{1}{n\tau} \sum_{i=1}^n\sum_{t=0}^{\tau-1}\nabla f_i^{r,t}(x^r)
\right\|^2}_{{\rm (VII)}\le\frac{1}{n\tau} \sum_{i=1}^n\sum_{t=0}^{\tau-1} 2L \bE( f_i^{r,t}(x^r) -(f^{r})^{\star}  )}\\ \notag
&+ \frac{\teta}{\theta}\bE\|x_{\xi}^r-x^r\|^2.
\end{align}       
If \eqref{eqtheta} holds, following our earlier analysis, we have
\begin{equation}\label{eq-sc-dynm}
\begin{aligned}
&\frac{1}{\teta}\bE\|x_{\xi}^{r+1}-x^\star\|^2 -\frac{1}{\teta}\bE\|x_{\xi}^{r}-x^\star\|^2 \\
&\le-  \frac{1}{n\tau}\sum_{i=1}^n\sum_{t=0}^{\tau-1} \bE(f_i^{r,t}(x^r)-(f^r)^{\star})\\
&+\left( \teta \left(1+\frac{\theta}{\teta}\right) + \frac{1}{2\mu}\right)2L^2\eta^2(2\tau^2 B_g^2 +{8 \max_{r,t}\|b^{r,t}\|^2 d nV^2})\\
&+ \frac{1}{\theta}\bE\| \frac{\teta}{{\tau}}b^{r-1,\tau-1} \bxi\|^2
+\underbrace{2 (f^r(x^{\star}) -(f^r)^{\star}  )}_{\le {L}\|x^{\star} -x_r^{\star}\|^2 }.
\end{aligned}         
\end{equation}
Set $\theta={1}/{(12L)}$ and $\teta\le {1}/{(8L)}$ so that condition~\eqref{eqtheta} holds, and 
let $C_R:= \sum_{r=0}^{R-1}\bE\|x^{\star}-x_r^{\star}\|^2$. Then 
$$
\begin{aligned}
&\frac{1}{R}\sum_{r=0}^{R-1}S^r\\
\le &\ \left( \teta \left(1+\frac{\theta}{\teta}\right) + \frac{1}{2\mu}\right)2L^2\eta^2(2\tau^2 B_g^2 +{8 \max_{r,t}\|b^{r,t}\|^2 d nV^2})\\
&+\frac{{\|\bB_R\|_F^2}}{\theta R}\frac{\teta^2}{{\tau^2}} dV^2+\frac{LC_R}{R}\\
\le&\ \mathcal{O} \left( \frac{\teta^2}{\eta_g^2} B_g^2  + \left(1+\frac{n}{\eta_g^2}\right)\frac{\teta^2\ln(R\tau) dV^2}{\tau^2} + \frac{C_{R}}{R}\right), 
\end{aligned}
$$
where the last inequality is due to $\max_{r,t}\|b^{r,t}\|^2\le \mathcal{O}(\ln(R\tau))$ and $\|\bB_R\|_F^2\le \mathcal{O}(R\ln(R\tau))$. 
We find the regret bound
\begin{equation}
\begin{aligned}
& \frac{\reg}{R\tau} 
\le \mathcal{O} \left( \frac{1}{R\teta}  + \frac{\teta^2}{\eta_g^2} B_g^2\right. \\
& \left.+ \left(1+\frac{n}{\eta_g^2}\right) \frac{\teta^{2}}{{\tau^2}} {\ln (R\tau)} d \ln (R\tau)  \frac{B_g^2}{n}  \frac{(\ln \frac{1}{\delta}) }{\epsilon^2}  + \frac{C_{R}}{R}\right).    
\end{aligned}    
\end{equation}

Similarly as before, with $\mathcal{O}\left(\frac{1}{R\teta }\right)= \mathcal{O}\left(\teta^2 (\ln (R\tau))^2 \right)$ and $\teta\le1/(8L)$, we have $\teta= \mathcal{O}( R^{-\frac{1}{3}} (\ln (R\tau))^{-\frac{2}{3}})$ and 
\begin{equation}\label{eq-sc-dyn}
\begin{aligned}
&\frac{\reg}{R\tau}\le\mathcal{O}\left( \frac{(\ln (R\tau))^{\frac{2}{3}}}{R^{\frac{2}{3}}} +\frac{B_g^2}{\eta_g^2 R^{\frac{2}{3}}(\ln (R\tau))^{\frac{4}{3}}} \right.\\
&\left.+\left(1+\frac{n}{\eta_g^2}\right) \frac{(\ln (R\tau))^{\frac{2}{3}}}{R^{\frac{2}{3}}}\frac{d B_g^2(\ln \frac{1}{\delta})}{n{\tau^2} \epsilon^2}+\frac{C_{R}}{R}\right).    
\end{aligned}    
\end{equation}
Hence, even under SC, the dependence on $C_R$ is ${C_{R}}/{R}$. 
\subsection{ Proof of Corollary~\ref{coro}: Static Regret under SC}\label{app-sc-static}
In the following, we show that the dependence on $C_R$ can be improved if we use static regret. With static regret, if condition~\eqref{eqtheta} holds, \eqref{eq-16-sc} becomes 
\begin{equation*}
\begin{aligned}
&\bE\|x_{\xi}^{r+1}-x^{\star}\|^2 -\bE\|x_{\xi}^{r}-x^{\star}\|^2 \\
&\le-\frac{2\teta}{n\tau}\sum_{i=1}^n\sum_{t=0}^{\tau-1} \bE(f_i^{r,t}(x^r)-f_i^{r,t}(x^\star))\\
&+2c_{{\rm VI} }\cdot\underbrace{\bE\Big\| \frac{1}{n\tau} \sum_{i=1}^n\sum_{t=0}^{\tau-1} (\nabla f_i^{r,t}(z_i^{r,t})-\nabla f_i^{r,t}(x^r))\Big\|^2}_{ \le L^2\eta^2(2\tau^2 B_g^2 +{8 \max_{r,t}\|b^{r,t}\|^2 d nV^2}) } \\
&+2c_{{\rm VII}}\cdot \underbrace{\bE \left\| \frac{1}{n\tau} \sum_{i=1}^n\sum_{t=0}^{\tau-1}\nabla f_i^{r,t}(x^r)
\right\|^2}_{\hspace{-1cm}\le\frac{1}{n\tau} \sum_{i=1}^n\sum_{t=0}^{\tau-1} 2L \bE( f_i^{r,t}(x^r) -(f^{r})^{\star} \underbrace{+ f^r(x^{\star}) -f^r(x^{\star})  }_{\text{for static Regret}}) }\\
&+ \frac{\teta}{\theta}\bE\|x_{\xi}^r-x^r\|^2
\end{aligned}       
\end{equation*}
and~\eqref{eq-sc-dynm} evaluates to 
\begin{equation*}
\begin{aligned}
&\frac{1}{\teta}\bE\|x_{\xi}^{r+1}-x^\star\|^2 -\frac{1}{\teta}\bE\|x_{\xi}^{r}-x^\star\|^2 \\
&\le -  \frac{1}{n\tau}\sum_{i=1}^n\sum_{t=0}^{\tau-1} \bE(f_i^{r,t}(x^r)-f^r(x^{\star}))\\
&+\left( \teta \left(1+\frac{\theta}{\teta}\right) + \frac{1}{2\mu}\right)2L^2\eta^2(2\tau^2 B_g^2 +{8 \max_{r,t}\|b^{r,t}\|^2 d nV^2})\\
&
+ \frac{1}{\theta}\bE\| {\frac{\teta}{\tau}} b^{r-1,\tau-1} \bxi\|^2+ \frac{2c_{{\rm VII}}}{\teta} \frac{1}{n\tau} \sum_{i=1}^n\sum_{t=0}^{\tau-1} 2L \bE{( f_i^{r,t}(x^{\star}) -(f^{r})^{\star}  )}.
\end{aligned}       
\end{equation*}
Consequently, we obtain the bound
\begin{equation*}
\begin{aligned}
&\frac{1}{n\tau}\sum_{i=1}^n\sum_{t=0}^{\tau-1} \bE(f_i^{r,t}(x^r)-f^r(x^{\star}))\\
\le \ &  \frac{1}{\teta}\bE\|x_{\xi}^{r+1}-x^\star\|^2 -\frac{1}{\teta}\bE\|x_{\xi}^{r}-x^\star\|^2 +\mathcal{O}\left( \frac{\teta^2}{\eta_g^2 }B_g^2+ \theta \|x^{\star}-x_r^{\star}\|^2\right. \\
&\left.+{\frac{\teta^2}{\eta_g^2 \tau^2} \max_{r,t}\|b^{r,t}\|^2 dn V^2 } + \frac{\teta^2}{\theta} {\|{\frac{1}{\tau}}b^{r-1,\tau-1}\bxi\|^2}
\right).
\end{aligned}    
\end{equation*}
Repeated application of the above inequality yields 
\begin{equation*}
\begin{aligned}
&\frac{1}{R} \sum_{r=0}^{R-1} \frac{1}{n\tau}\sum_{i=1}^n\sum_{t=0}^{\tau-1} \bE(f_i^{r,t}(x^r)-f^r(x^{\star}))\\
\le&\mathcal{O}\left(\frac{1}{ \teta R}+ \frac{\teta^2}{\eta_g^2 }B_g^2+ \left(1+\frac{n}{\eta_g^2}\right) \frac{\teta^2}{\theta}   (\ln (R\tau))^2 \frac{d B_g^2}{n} \frac{(\ln \frac{1}{\delta})}{{\tau^2}\epsilon^2}     + \theta \frac{C_R}{R} \right).    
\end{aligned}    
\end{equation*} 
From the last two terms of the regret upper bound, we observe a trade-off on $\theta$. To balance this trade-off and satisfy condition \eqref{eqtheta}, we set $ \frac{\teta^2}{\theta} (\ln (R\tau))^2=\theta $ and obtain
\begin{equation*}
\theta=\teta\ln({R\tau})\quad\text{and}\quad\teta\le \frac{1}{10L \ln(R\tau)},
\end{equation*}
which yields the bound
\begin{equation*}
\begin{aligned}
&\frac{1}{R} \sum_{r=0}^{R-1} \frac{1}{n\tau}\sum_{i=1}^n\sum_{t=0}^{\tau-1} \bE(f_i^{r,t}(x^r)-f^r(x^{\star}))\\
\le&\mathcal{O}\left(\frac{1}{\teta R} +\frac{\teta^2}{\eta_g^2 }B_g^2+ \left(1+\frac{n}{\eta_g^2}\right)\teta \ln({R\tau}) \frac{d B_g^2(\ln \frac{1}{\delta})}{n {\tau^2}\epsilon^2}+ \teta \ln({R\tau})\frac{C_R}{R}
\right).
\end{aligned}    
\end{equation*}
Since $\teta= \mathcal{O}((R\ln(R\tau)) ^{-\frac{1}{2}})$, it follows that 
\begin{equation*}
\begin{aligned}
&\frac{1}{R} \sum_{r=0}^{R-1} \frac{1}{n\tau}\sum_{i=1}^n\sum_{t=0}^{\tau-1} \bE(f_i^{r,t}(x^r)-f^r(x^{\star}))\\
\le&\mathcal{O}\left(\frac{(\ln(R\tau))^{\frac{1}{2}}}{R^{\frac{1}{2}}} +\frac{1}{R(\ln(R\tau))}\frac{B_g^2}{\eta_g^2 } \right.\\
&\left.+ \left(1+\frac{n}{\eta_g^2}\right)\frac{(\ln(R\tau))^{\frac{1}{2}}}{R^{\frac{1}{2}}}\frac{d B_g^2(\ln \frac{1}{\delta})}{{\tau^2}n \epsilon^2} + \frac{(\ln(R\tau))^{\frac{1}{2}}}{R^{\frac{1}{2}}} \frac{C_{R}}{R}
\right),
\end{aligned}    
\end{equation*}
which completes the proof of Corollary~\ref{coro}.

\end{document}